\documentclass{article}

\usepackage[preprint]{neurips_2026}

\usepackage[utf8]{inputenc} % allow utf-8 input
\usepackage[T1]{fontenc}    % use 8-bit T1 fonts
\usepackage{hyperref}       % hyperlinks
\usepackage{url}            % simple URL typesetting
\usepackage{booktabs}       % professional-quality tables
\usepackage{amsfonts}       % blackboard math symbols
\usepackage{nicefrac}       % compact symbols for 1/2, etc.
\usepackage{microtype}      % microtypography
\usepackage[table]{xcolor}  % colors

\usepackage{graphicx}
\usepackage{amsmath}
\usepackage{amsthm}
\usepackage{algorithm}
\usepackage{algorithmic}
\usepackage{multirow}
\usepackage{diagbox}
\usepackage{colortbl}
\usepackage{makecell}

\newtheorem{theorem}{Theorem}[section]
\newtheorem{proposition}[theorem]{Proposition}

\title{CAMPA: Efficient and Aligned Multimodal Graph Learning via Decoupled Propagation and Aggregation}

\author{%
\textbf{Daohan Su \quad Hao Liu \quad Xunkai Li \quad Yinlin Zhu} \\
\textbf{Xiong Yongfu \quad Yi Liu \quad Hongchao Qin \quad Rong-Hua Li \quad Guoren Wang}
}

\begin{document}

\maketitle

\begin{abstract}
Multimodal Graph Neural Networks (MGNNs) have shown strong potential for learning from multimodal attributed graphs, yet most existing approaches rely on tightly coupled architectures that suffer from prohibitive computational overhead.
In this paper, we present a systematic empirical analysis showing that decoupled MGNNs are substantially more efficient and scalable for large-scale graph learning.
However, we identify a critical bottleneck in existing decoupled pipelines, namely \textbf{modal conflict}, which arises in both the propagation and aggregation stages.
Specifically, independent multi-hop diffusion causes cross-modal semantic divergence during propagation, while naive fusion fails to align multi-hop feature trajectories during aggregation, jointly limiting effective representation learning.
To address this challenge, we propose \textbf{CAMPA}, a \textbf{C}ross-modal \textbf{A}ligned \textbf{M}ultimodal \textbf{P}ropagation \& \textbf{A}ggregation framework for decoupled multimodal graph learning.
Concretely, CAMPA introduces a two-stage alignment mechanism:
(1) cross-modal aligned propagation, which injects cross-modal similarity priors into message passing to preserve semantic consistency without additional parameter overhead;
(2) trajectory aligned aggregation, which leverages trajectory-level self-attention and cross-attention to capture and align long-range dependencies across modalities and hops.
Extensive experiments on diverse benchmark datasets and tasks demonstrate that CAMPA consistently outperforms strong coupled and decoupled baselines while preserving the efficiency advantages of the decoupled paradigm.
\end{abstract}

\section{Introduction}
\label{sec:intro}
Learning on Multimodal Attributed Graphs (MAGs) has attracted growing attention as multimodal data and relational structures are increasingly intertwined in real-world systems, such as recommendation platforms and social media networks~\cite{LGMRec,bian2020_directed_app_social1,zhu2025mm_graph,yan2025magb,wan2026openmag}.
Meanwhile, graph neural networks (GNNs) have become a dominant paradigm for relational representation learning due to their strong ability to model high-order structural dependencies~\cite{kipf2016gcn,velivckovic2017gat,chen2020gcnii,wu2020gnn_survey1,zhou2022gnn_survey2}.
Unlike conventional GNNs that mainly operate on single-modality node attributes, multimodal GNNs (MGNNs) must jointly model heterogeneous semantic signals (e.g., text and images) together with graph topology, making effective multimodal fusion under structural constraints the central challenge in this domain~\cite{cai2024omg_nas,liu2025graph_mllm,wang2025mg_llm}.

A predominant paradigm in existing MGNNs relies on \textit{coupled} architectures, where feature propagation and modality transformation are tightly intertwined within each layer~\cite{MMGCN,MGAT}.
Although this design enables repeated topology-aware multimodal interaction, it also makes training expensive as graph size grows.
This raises the first question: can the decoupling principle, which separates graph propagation from feature transformation and has been widely adopted for scalable graph learning~\cite{wu2019sgc,frasca2020sign,zhang2021ndls}, provide a more efficient foundation for MAG learning?
To answer this question, we compare representative coupled baselines (i.e., MGAT~\cite{MGAT} and MMGCN~\cite{MMGCN}) with decoupled baselines (e.g., MSGC, SIGN~\cite{frasca2020sign}, and MGDN~\cite{hu2024mgdcf}) under the same MAG setting.
As shown in Fig.~\ref{fig:empirical}(a), decoupled models reach comparable or better accuracy with substantially less training time, whereas coupled MGNNs incur much higher latency and quickly saturate.
This observation motivates decoupled architectures as a natural starting point for scalable multimodal graph learning, especially as recent MAG research moves toward larger graphs, more diverse objectives, stronger multimodal backbones, and graph-conditioned multimodal generation~\cite{fan2025mlaga,fang2025graphgpt_o,ning2025graph4mm,jin2024instructg2i,wan2026openmag}.
The complete experimental protocol is provided in Appendix~\ref{app:empirical_setup}.

However, decoupling also raises a second question: what limitation emerges when multimodal interaction is weakened during the propagate-then-aggregate pipeline?
In decoupled MAG models, modality-specific features are first diffused over the graph and then summarized by a lightweight fusion module.
This pipeline can introduce modal conflict at two stages.
\textit{(1) Propagation stage:} text and image features originate from different semantic spaces, and independent multi-hop diffusion can make their hop-wise representations drift along modality-specific directions, producing spatial asynchrony.
\textit{(2) Aggregation stage:} after propagation, each modality forms a multi-hop trajectory, but naive late fusion struggles to match semantically corresponding hops across modalities, leading to trajectory-level mismatch.
To examine whether these two stages are indeed bottlenecks, we equip representative decoupled baselines with lightweight propagation-stage and aggregation-stage alignment modules.
Fig.~\ref{fig:empirical}(b) shows that both alignments improve decoupled baselines, and their combination yields the strongest gains.
These intervention results motivate our central insight: scalable MAG learning requires not only decoupled computation, but also explicit stage-wise alignment during both propagation and aggregation.
Additional heatmap and t-SNE visualizations in Appendix~\ref{app:empirical_vis} further illustrate the representation-level divergence behind this phenomenon.

\begin{figure}[t]
\centering
\includegraphics[width=\linewidth]{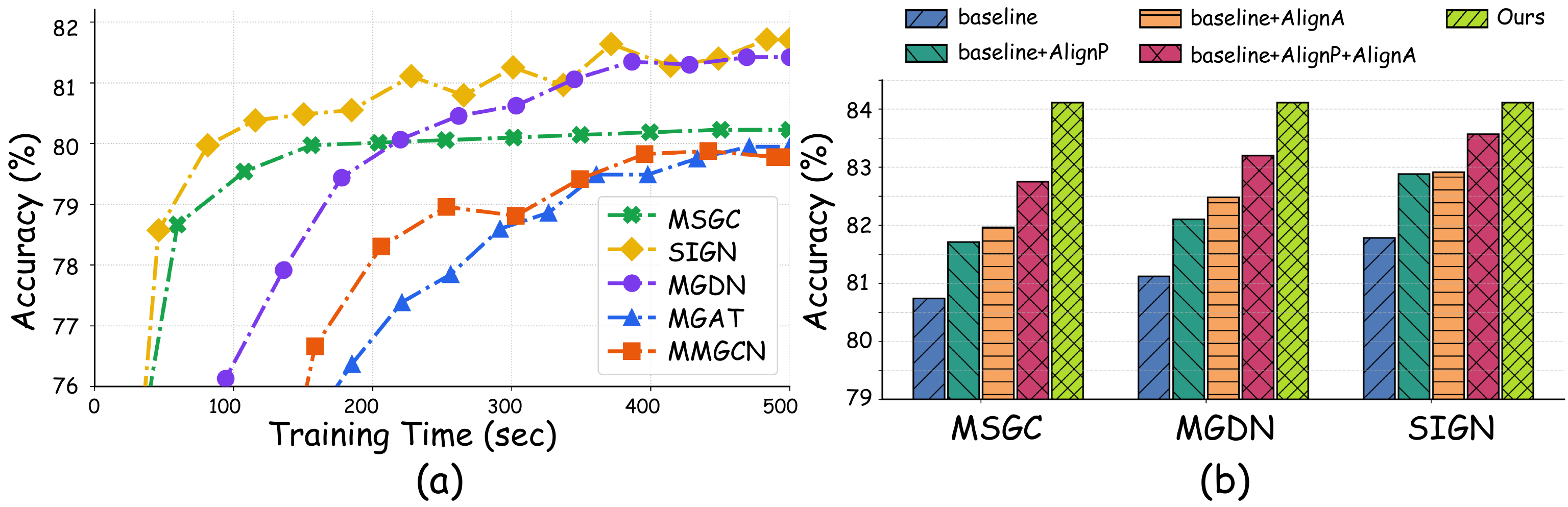}
\vspace{-0.5cm}
\caption{Empirical motivation. (a) Accuracy vs. training time, verifying the scalability advantage of decoupled architectures in MAG learning. (b) Performance boost of decoupled baselines after lightweight propagation-stage alignment (AlignP), aggregation-stage alignment (AlignA), or both, motivating explicit stage-wise coordination.}
% \vspace{-0.5cm}
\label{fig:empirical}
\end{figure}

Motivated by these critical insights, we propose \textbf{CAMPA} (\textbf{C}ross-modal \textbf{A}ligned \textbf{M}ultimodal \textbf{P}ropagation \& \textbf{A}ggregation), a decoupled framework designed to resolve modal conflict while preserving the efficiency advantages of decoupled MGNNs.
The key idea is to align multimodal information at the two stages where conflict arises.
First, CAMPA introduces \textit{cross-modal aligned propagation}, which injects cross-modal similarity priors into the diffusion process to maintain semantic consistency during propagation without additional parameter overhead.
Second, CAMPA further employs \textit{trajectory aligned aggregation}, which leverages trajectory-level self-attention and cross-attention to align multi-hop contextual dependencies across modalities during aggregation.
In this way, CAMPA strengthens cross-modal coordination while retaining the lightweight computation pattern that makes decoupled MGNNs attractive for large-scale learning.

Our contributions are summarized as follows:
\underline{\textit{(1) In-depth empirical insights.}}
We provide empirical evidence showing the scalability advantage of decoupled MGNNs and identify stage-wise modal conflict as a key bottleneck in both propagation and aggregation stages.
\underline{\textit{(2) A novel decoupled alignment framework.}}
We propose CAMPA, a principled MAG learning framework that aligns multimodal information at both propagation and aggregation stages, thereby reconciling semantic alignment with the efficiency advantages of the decoupled paradigm.
\underline{\textit{(3) Strong performance across tasks.}}
Extensive experiments on diverse benchmark datasets and tasks demonstrate that CAMPA consistently outperforms baselines while maintaining high efficiency.

\section{Preliminaries}
\label{sec:prelim}

\subsection{Problem Formulation}
We consider a MAG $\mathcal{G}=(\mathcal{V},\mathcal{E},\{\mathbf{X}^{(m)}\}_{m\in\mathcal{M}})$, where $\mathcal{V}$ and $\mathcal{E}$ are the node and edge sets, $\mathcal{M}$ is the modality index set, and $\mathbf{X}^{(m)}\in\mathbb{R}^{N\times d_m}$ is the feature matrix of modality $m$.
The shared topology is represented by the adjacency matrix $\mathbf{A}\in\{0,1\}^{N\times N}$ with degree matrix $\mathbf{D}$ and normalized form $\widetilde{\mathbf{A}}=\mathbf{D}^{-1/2}\mathbf{A}\mathbf{D}^{-1/2}$.
For clarity, we instantiate the formulation on the common text-image setting, although it naturally extends to more modalities.

Following recent MAG studies and recent benchmark efforts~\cite{zhu2025mm_graph,yan2025magb,wan2026openmag}, we consider two families of downstream tasks.
\textbf{Graph-centric tasks} examine whether the learned node representations preserve both graph topology and multimodal semantics, including \textit{Node Classification}, \textit{Node Clustering}, and \textit{Link Prediction}.
\textbf{Modality-centric tasks} examine whether graph-aware representations improve cross-modal understanding and generation, including \textit{Modality Retrieval}, \textit{Modality Matching}, \textit{Modality Alignment}, and \textit{Graph-to-Image Generation}~\cite{yoon2023mmgl,ning2025graph4mm,fang2025graphgpt_o,jin2024instructg2i}.

\subsection{Decoupled Graph Learning}
Decoupled graph learning has become an important paradigm for scalable graph learning by explicitly separating structure propagation from feature transformation~\cite{wu2019sgc,frasca2020sign,chen2022nagphormer,zhang2021ndls}.
Unlike coupled GNNs, which interleave neighborhood aggregation and parameterized transformation layer by layer, decoupled methods first compute multi-hop propagated features and then apply lightweight predictors on top.
This separation substantially reduces training cost while preserving strong predictive performance on large graphs~\cite{gamlp}.
In a standard decoupled pipeline, the input feature matrix $\mathbf{X}$ is transformed into a trajectory $\{\mathbf{H}^{(0)},\mathbf{H}^{(1)},\ldots,\mathbf{H}^{(K)}\}$, where $\mathbf{H}^{(0)}=\mathbf{X}$ and $\mathbf{H}^{(k)}$ denotes the $k$-hop propagated representation.
These trajectories are then aggregated by a learnable predictor, making decoupled graph learning attractive when propagation can be precomputed or reused across epochs.

\subsection{Multimodal Attributed Graph Learning}
MAG learning extends graph learning to MAGs by jointly modeling graph topology and modality attributes~\cite{zhu2025mm_graph,yan2025magb,wan2026openmag}.
Existing MAG methods can be broadly summarized into three lines.

\textbf{Conventional multimodal graph models} typically adapt early multimodal recommendation architectures to graphs by designing topology-aware interaction or fusion modules~\cite{MMGCN,MGAT,LGMRec}.
These methods mainly emphasize topology-guided multimodal fusion and form the early foundation.

\textbf{LLM- or token-based methods} incorporate graph topology into multimodal reasoning through neighborhood tokenization, topology-aware aligners, or graph-conditioned generation modules~\cite{ning2025graph4mm,fang2025graphgpt_o,fan2025mlaga,jin2024instructg2i,liu2025graph_mllm,wang2025mg_llm}.
This line connects MAG learning with multimodal comprehension and generation by transforming structural information into tokenized contexts or auxiliary semantic priors.

\textbf{Graph-enhanced methods} further design specialized architectures, such as refined message passing, graph transformers, spectral filtering, or unified embedding spaces, to better capture topology-modality interplay~\cite{MIG-GT,hu2025ntsformer,DGF,he2025unigraph2,cai2024omg_nas,li2025c_mag}.
Compared with the previous two lines, these methods place greater emphasis on graph backbones and more expressive topology-aware multimodal interaction.

\section{Method}
\label{sec:method}

\begin{figure}[t]
\centering
\includegraphics[width=\linewidth]{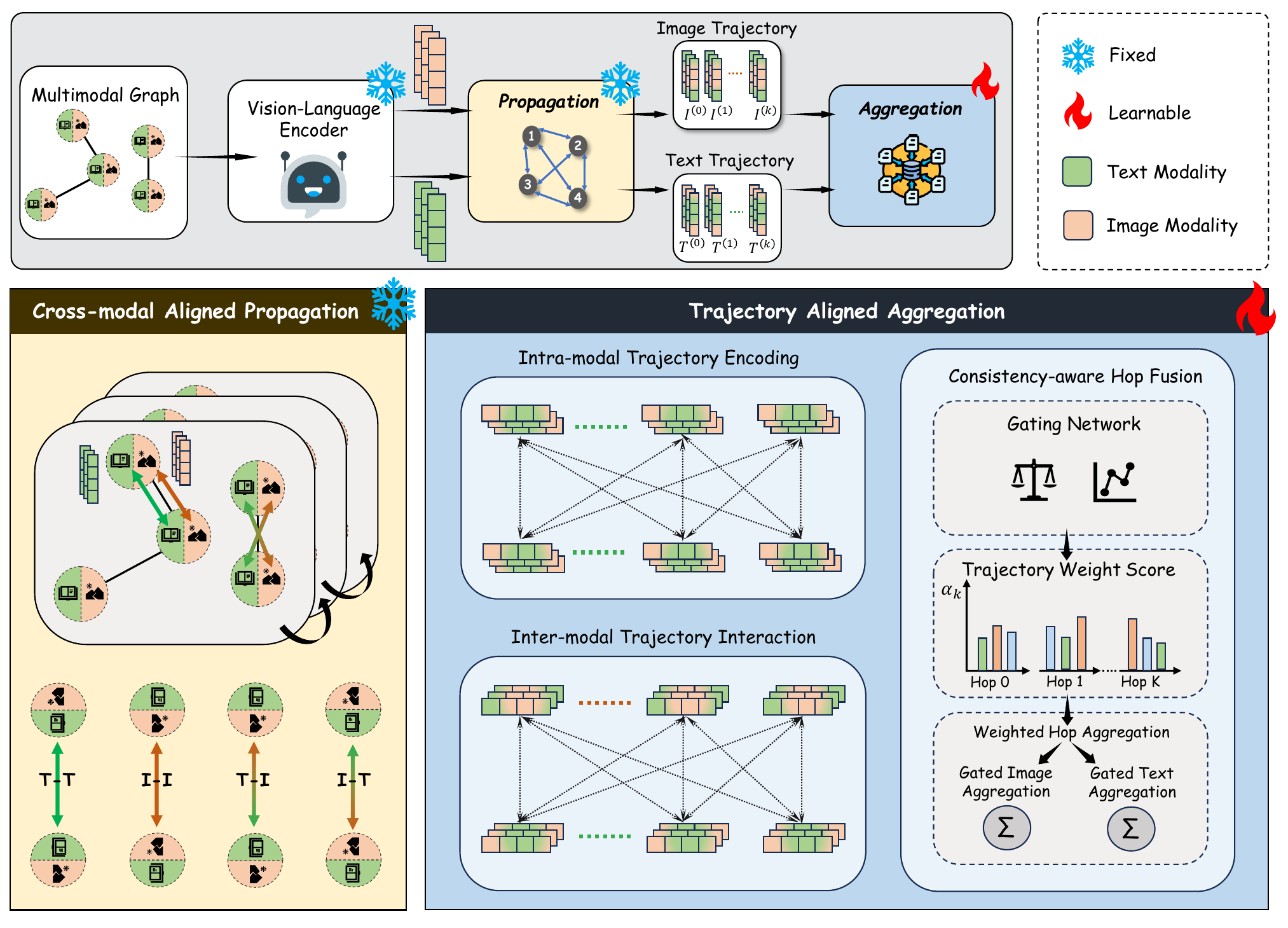}
\vspace{-0.5cm}
\caption{Overall framework of CAMPA. It first performs \textit{cross-modal aligned propagation} to construct modality-specific multi-hop trajectories with cross-modal semantic priors injected into graph diffusion. It then applies \textit{trajectory aligned aggregation} to model intra-modal hop dependencies and inter-modal trajectory interactions, producing unified node representations for downstream tasks.}
\label{fig:framework}
\end{figure}

We instantiate CAMPA on the common text-image setting, while the formulation naturally extends to more modalities.
The raw inputs consist of graph topology together with node-associated textual and visual contents.
Before graph learning, the text and image contents are encoded by a frozen pre-trained vision-language encoder, producing modality features that serve as the input of CAMPA.
If the encoded modalities have different dimensionalities, lightweight modality-specific projections are further applied; for simplicity, we denote the resulting graph features as $\mathbf{H}_{t}^{(0)}, \mathbf{H}_{i}^{(0)} \in \mathbb{R}^{N \times d}$.
As illustrated in Fig.~\ref{fig:framework}, CAMPA follows a decoupled yet alignment-aware pipeline with two coordinated alignment components:
\textbf{Cross-modal Aligned Propagation (CAP)} injects cross-modal semantic priors into graph diffusion, and
\textbf{Trajectory Aligned Aggregation (TAA)} performs trajectory-level interaction and consistency-aware fusion over the propagated features.

\subsection{Cross-modal Aligned Propagation}
Existing decoupled MAG models typically diffuse each modality independently on the shared graph.
While efficient, such a design ignores whether connected nodes are semantically compatible across modalities, causing the propagated trajectories to drift apart as the hop depth increases.
CAP addresses this issue by refining the propagation operators with modality-aware semantic priors, while preserving the precomputable nature of decoupled diffusion.

\textbf{Semantic Propagation Priors.}
For each modality pair $(u,v)\in\{t,i\}\times\{t,i\}$, we estimate edge-wise semantic priors from the initial node features and use them to modulate the shared topology:
\begin{equation}
\mathbf{S}_{uv} =\left(1+\mathrm{cos\mbox{-}sim}\!\left(\mathbf{H}_{u}^{(0)}, \mathbf{H}_{v}^{(0)}\right)\right)/2, \qquad
\widetilde{\mathbf{A}}_{uv} = \mathrm{Norm}\!\left(\mathbf{A}\odot\mathbf{S}_{uv}\right),
\label{eq:cap_prior}
\end{equation}
where the rescaling maps cosine similarity to $[0,1]$, $\mathrm{Norm}(\mathbf{B})=\mathbf{D}_{\mathbf{B}}^{-1/2}\mathbf{B}\mathbf{D}_{\mathbf{B}}^{-1/2}$ is symmetric normalization, and $\odot$ denotes the Hadamard product.
Here, $\widetilde{\mathbf{A}}_{tt}$ and $\widetilde{\mathbf{A}}_{ii}$ preserve modality-specific local smoothness, whereas $\widetilde{\mathbf{A}}_{ti}$ and $\widetilde{\mathbf{A}}_{it}$ inject cross-modal correspondence into graph diffusion.

\textbf{Aligned Multi-hop Diffusion.}
Let $\bar{u}$ denote the counterpart modality of $u$.
For each $u\in\{t,i\}$, CAP updates the $k$-hop representation by combining intra-modal diffusion, cross-modal correction, and residual retention:
\begin{equation}
\mathbf{H}_{u}^{(k+1)} =
\alpha_{u}\widetilde{\mathbf{A}}_{uu}\mathbf{H}_{u}^{(k)}
+ \beta_{u}\widetilde{\mathbf{A}}_{u\bar{u}}\mathbf{H}_{\bar{u}}^{(k)}
+ \gamma_{u}\mathbf{H}_{u}^{(0)},
\quad
\alpha_{u}+\beta_{u}+\gamma_{u}=1,
\label{eq:cap_update}
\end{equation}
where $\alpha_{u},\beta_{u},\gamma_{u}\ge 0$ are tunable hyperparameters.
The first term captures modality-specific structural homophily, the second term continuously corrects the trajectory with aligned evidence from the other modality, and the residual branch stabilizes deep propagation.
After $K$ steps, we obtain two aligned propagation trajectories, $\{\mathbf{H}_{t}^{(0)},\ldots,\mathbf{H}_{t}^{(K)}\}$ and $\{\mathbf{H}_{i}^{(0)},\ldots,\mathbf{H}_{i}^{(K)}\}$.
Notably, CAP introduces no hop-specific transformation during propagation, and thus remains fully compatible with the efficiency advantages of decoupled MAG learning.

\subsection{Trajectory Aligned Aggregation}
CAP produces alignment-aware multi-hop trajectories, yet directly averaging or concatenating them still overlooks two key aspects:
different hops within one modality often play complementary roles, and cross-modal semantic agreement may emerge at different propagation depths.
To address this, TAA models both intra-modal trajectory dependency and inter-modal trajectory interaction before performing adaptive hop fusion.

\textbf{Intra-modal Trajectory Encoding.}
For each modality $u\in\{t,i\}$, we slightly abuse notation and stack the hop-wise propagated features as $\mathbf{H}_{u}=[\mathbf{H}_{u}^{(0)},\ldots,\mathbf{H}_{u}^{(K)}]\in\mathbb{R}^{N\times (K+1)\times d}$.
All subsequent attention operations are applied along the trajectory dimension, while the node dimension is omitted as an implicit batch dimension.
TAA first applies self-attention to capture dependencies among shallow and deep propagated features within the same modality:
\begin{equation}
\mathbf{Q}_{u} = \mathbf{H}_{u}\mathbf{W}_{Q}^{u}, \quad
\mathbf{K}_{u} = \mathbf{H}_{u}\mathbf{W}_{K}^{u}, \quad
\mathbf{V}_{u} = \mathbf{H}_{u}\mathbf{W}_{V}^{u},
\label{eq:taa_qkv_self}
\end{equation}
\begin{equation}
\mathbf{F}_{u} =
\mathbf{H}_{u}
+ \mathrm{MLP}\!\left(
\mathrm{softmax}\!\left(
\frac{\mathbf{Q}_{u}\mathbf{K}_{u}^{\top}}{\sqrt{d}}
\right)\mathbf{V}_{u}
\right),
\label{eq:taa_self}
\end{equation}
where $\mathbf{W}_{Q}^{u},\mathbf{W}_{K}^{u},\mathbf{W}_{V}^{u}\in\mathbb{R}^{d\times d}$ are learnable projections.
This step contextualizes each hop with information from the entire propagation trajectory rather than treating different diffusion depths independently.

\textbf{Inter-modal Trajectory Interaction.}
Based on the self-contextualized trajectories, we further introduce cross-attention to exchange complementary semantics across modalities:
\begin{equation}
\widehat{\mathbf{Q}}_{u\leftarrow\bar{u}} = \mathbf{F}_{u}\mathbf{W}_{Q}^{u\leftarrow\bar{u}}, \quad
\widehat{\mathbf{K}}_{u\leftarrow\bar{u}} = \mathbf{F}_{\bar{u}}\mathbf{W}_{K}^{u\leftarrow\bar{u}}, \quad
\widehat{\mathbf{V}}_{u\leftarrow\bar{u}} = \mathbf{F}_{\bar{u}}\mathbf{W}_{V}^{u\leftarrow\bar{u}},
\label{eq:taa_qkv_cross}
\end{equation}
\begin{equation}
\mathbf{G}_{u} =
\mathbf{F}_{u}
+ \mathrm{MLP}\!\left(
\mathrm{softmax}\!\left(
\frac{\widehat{\mathbf{Q}}_{u\leftarrow\bar{u}}
\widehat{\mathbf{K}}_{u\leftarrow\bar{u}}^{\top}}{\sqrt{d}}
\right)\widehat{\mathbf{V}}_{u\leftarrow\bar{u}}
\right),
\label{eq:taa_cross}
\end{equation}
where $\mathbf{W}_{Q}^{u\leftarrow\bar{u}},\mathbf{W}_{K}^{u\leftarrow\bar{u}},\mathbf{W}_{V}^{u\leftarrow\bar{u}}\in\mathbb{R}^{d\times d}$.
Unlike post-hoc late fusion, Eq.~\eqref{eq:taa_cross} aligns the two modalities at the trajectory level, enabling each hop to retrieve semantically compatible evidence from the other modality.

\textbf{Consistency-aware Hop Fusion.}
After trajectory interaction, we estimate hop-wise cross-modal consistency and use it as an auxiliary signal for modality-specific hop weighting:
\begin{equation}
\mathbf{C} = \mathrm{Cos}\!\left(\mathbf{G}_{t}, \mathbf{G}_{i}\right)\in\mathbb{R}^{N\times (K+1)\times 1},
\label{eq:taa_consistency}
\end{equation}
where $\mathrm{Cos}(\cdot,\cdot)$ computes cosine similarity along the feature dimension for aligned hop slices.
The hop weights of the two modalities are then predicted by separate gating networks:
\begin{equation}
\mathbf{A}_{u} = \mathrm{Softmax}_{\mathrm{hop}}\!\left(\mathcal{G}_{u}\!\left([\mathbf{G}_{u}\|\mathbf{C}]\right)\right), \quad u\in\{t,i\},
\label{eq:taa_gate}
\end{equation}
where $\mathcal{G}_{t}(\cdot)$ and $\mathcal{G}_{i}(\cdot)$ are modality-specific scoring networks, $\mathrm{Softmax}_{\mathrm{hop}}(\cdot)$ normalizes the scores along the hop dimension, and $\mathbf{A}_{u}^{(k)}\in\mathbb{R}^{N\times 1}$ denotes the hop weight for the $k$-th slice of modality $u$.
The final representations are obtained by weighted aggregation over the aligned trajectories:
\begin{equation}
\mathbf{Z}_{u}=
\sum_{k=0}^{K}
\mathbf{A}_{u}^{(k)} \odot \mathbf{G}_{u}^{(k)}, \quad u\in\{t,i\},
\label{eq:taa_output}
\end{equation}
and the final representation for downstream prediction is formed as $\mathbf{Z}=\mathbf{Z}_{t}\|\mathbf{Z}_{i}$.
This design preserves modality-specific aggregation preferences while still allowing the shared consistency signal to calibrate both gating processes.

\subsection{Theoretical Analysis}
\label{sec:theory}
We next formalize CAMPA from the perspectives of propagation stability and fusion control, with full proofs deferred to Appendix~\ref{app:theory}.

\subsubsection{Stable aligned diffusion in CAP}
Let
\begin{equation}
\mathbf{X}^{(k)}=
\begin{bmatrix}
\mathbf{H}_{t}^{(k)}\\
\mathbf{H}_{i}^{(k)}
\end{bmatrix}, \quad
\mathbf{P}=
\begin{bmatrix}
\alpha_{t}\widetilde{\mathbf{A}}_{tt} & \beta_{t}\widetilde{\mathbf{A}}_{ti}\\
\beta_{i}\widetilde{\mathbf{A}}_{it} & \alpha_{i}\widetilde{\mathbf{A}}_{ii}
\end{bmatrix}, \quad
\mathbf{R}=
\begin{bmatrix}
\gamma_{t}\mathbf{I} & \mathbf{0}\\
\mathbf{0} & \gamma_{i}\mathbf{I}
\end{bmatrix}.
\label{eq:theory_cap_block}
\end{equation}
Then Eq.~\eqref{eq:cap_update} can be written as
\begin{equation}
\mathbf{X}^{(k+1)}=\mathbf{P}\mathbf{X}^{(k)}+\mathbf{R}\mathbf{X}^{(0)}.
\label{eq:theory_cap_recursion}
\end{equation}
Since each $\widetilde{\mathbf{A}}_{uv}$ is symmetrically normalized, $\|\widetilde{\mathbf{A}}_{uv}\|_{2}\le 1$.
Define $\|\mathbf{X}\|_{2,\infty}=\max(\|\mathbf{X}_{t}\|_{F},\|\mathbf{X}_{i}\|_{F})$ for $\mathbf{X}=[\mathbf{X}_{t};\mathbf{X}_{i}]$ and let $\rho=\max(\alpha_{t}+\beta_{t},\alpha_{i}+\beta_{i})$.
Because $\alpha_{u}+\beta_{u}+\gamma_{u}=1$, the condition $\rho<1$ is naturally satisfied whenever $\gamma_{t}>0$ and $\gamma_{i}>0$.

\begin{theorem}[Convergence and stability of CAP]
\label{thm:cap_stability}
If $\rho<1$, the CAP recursion in Eq.~\eqref{eq:theory_cap_recursion} admits the unique fixed point
\begin{equation}
\mathbf{X}^{\star}=(\mathbf{I}-\mathbf{P})^{-1}\mathbf{R}\mathbf{X}^{(0)},
\label{eq:theory_cap_fixed}
\end{equation}
and its iterates satisfy
\begin{equation}
\|\mathbf{X}^{(k+1)}-\mathbf{X}^{\star}\|_{2,\infty}
\le
\rho \|\mathbf{X}^{(k)}-\mathbf{X}^{\star}\|_{2,\infty}.
\label{eq:theory_cap_contract}
\end{equation}
\end{theorem}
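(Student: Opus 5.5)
The strategy is to treat the CAP recursion $\mathbf{X}\mapsto \mathbf{P}\mathbf{X}+\mathbf{R}\mathbf{X}^{(0)}$ as an affine map and show that its linear part $\mathbf{P}$ contracts in the block norm $\|\cdot\|_{2,\infty}$ with factor $\rho$. Everything else follows from this one estimate.

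\textbf{Step 1 (contraction of $\mathbf{P}$).} For any $\mathbf{Y}=[\mathbf{Y}_t;\mathbf{Y}_i]$, the $t$-block of $\mathbf{P}\mathbf{Y}$ is $\alpha_t\widetilde{\mathbf{A}}_{tt}\mathbf{Y}_t+\beta_t\widetilde{\mathbf{A}}_{ti}\mathbf{Y}_i$. We use the inequality $\|\mathbf{B}\mathbf{M}\|_F\le\|\mathbf{B}\|_2\|\mathbf{M}\|_F$, the bound $\|\widetilde{\mathbf{A}}_{uv}\|_2\le 1$ stated before the theorem, and the triangle inequality. These give
\begin{equation*}
\|(\mathbf{P}\mathbf{Y})_t\|_F\le \alpha_t\|\mathbf{Y}_t\|_F+\beta_t\|\mathbf{Y}_i\|_F\le(\alpha_t+\beta_t)\|\mathbf{Y}\|_{2,\infty},
\end{equation*}
where nonnegativity of $\alpha_t,\beta_t$ is used. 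The $i$-block is handled the same way, so $\|\mathbf{P}\mathbf{Y}\|_{2,\infty}\le\rho\|\mathbf{Y}\|_{2,\infty}$.

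This is the only step that needs care. The spectral bound on $\widetilde{\mathbf{A}}_{ti}$ is taken from the paper as given. If one wanted to justify it, note that $\mathbf{A}\odot\mathbf{S}_{ti}$ need not be symmetric, since $\mathbf{S}_{ti}(a,b)$ compares the text feature of $a$ with the image feature of $b$. In that case one would interpret $\mathrm{Norm}$ with row and column degrees, $\mathbf{D}_r^{-1/2}\mathbf{B}\mathbf{D}_c^{-1/2}$, and apply Schur's test, giving $\|\cdot\|_2\le\sqrt{\|\cdot\|_1\|\cdot\|_\infty}\le 1$. For the proposal I will simply invoke the stated bound.

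\textbf{Step 2 (existence and uniqueness of the fixed point).} By Step 1, the operator norm of $\mathbf{P}$ induced by $\|\cdot\|_{2,\infty}$ is at most $\rho<1$. Hence the Neumann series $\sum_{j\ge0}\mathbf{P}^j$ converges, $\mathbf{I}-\mathbf{P}$ is invertible, and $\mathbf{X}^\star=(\mathbf{I}-\mathbf{P})^{-1}\mathbf{R}\mathbf{X}^{(0)}$ satisfies $\mathbf{X}^\star=\mathbf{P}\mathbf{X}^\star+\mathbf{R}\mathbf{X}^{(0)}$.

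For uniqueness, suppose $\mathbf{X}'$ is another fixed point. Then $\mathbf{X}^\star-\mathbf{X}'=\mathbf{P}(\mathbf{X}^\star-\mathbf{X}')$, so $\|\mathbf{X}^\star-\mathbf{X}'\|_{2,\infty}\le\rho\|\mathbf{X}^\star-\mathbf{X}'\|_{2,\infty}$, which forces $\mathbf{X}'=\mathbf{X}^\star$. Equivalently, the Banach fixed-point theorem applies on the finite-dimensional space, which is complete under this norm.

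\textbf{Step 3 (contraction of the iterates).} Subtract the fixed-point identity from Eq.~\eqref{eq:theory_cap_recursion}. The $\mathbf{R}\mathbf{X}^{(0)}$ terms cancel, leaving
\begin{equation*}
\mathbf{X}^{(k+1)}-\mathbf{X}^\star=\mathbf{P}(\mathbf{X}^{(k)}-\mathbf{X}^\star).
\end{equation*}
Applying Step 1 yields Eq.~\eqref{eq:theory_cap_contract}. Iterating gives geometric convergence at rate $\rho^k$.
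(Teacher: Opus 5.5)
Your proposal is correct and follows the paper's proof essentially step for step. Both arguments use the blockwise estimate $\|\mathbf{P}\mathbf{Y}\|_{2,\infty}\le\rho\|\mathbf{Y}\|_{2,\infty}$ from $\|\widetilde{\mathbf{A}}_{uv}\|_2\le 1$, obtain the fixed point from invertibility of $\mathbf{I}-\mathbf{P}$, and finish with the error recursion $\mathbf{X}^{(k+1)}-\mathbf{X}^{\star}=\mathbf{P}(\mathbf{X}^{(k)}-\mathbf{X}^{\star})$; your Neumann-series justification and your use of $\|\mathbf{B}\mathbf{M}\|_F\le\|\mathbf{B}\|_2\|\mathbf{M}\|_F$ are a bit more careful than the paper's wording.

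One correction to your aside, which the main argument does not use. You are right that $\mathbf{A}\odot\mathbf{S}_{ti}$ need not be symmetric. However, the unweighted bound $\sqrt{\|\cdot\|_1\|\cdot\|_\infty}$ is not $\le 1$ in general: for the normalized star graph it equals $\sqrt{n}$. To get $\|\mathbf{D}_r^{-1/2}\mathbf{B}\mathbf{D}_c^{-1/2}\|_2\le 1$ you need the weighted Schur test with weights $\sqrt{r_a}$ and $\sqrt{c_b}$ (the row and column sums of $\mathbf{B}$), or equivalently a Cauchy--Schwarz argument.
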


Theorem~\ref{thm:cap_stability} shows that CAP preserves the stability of decoupled propagation after introducing cross-modal correction, while the residual branch prevents unbounded drift.

\subsubsection{Discrepancy-controlled fusion in TAA}
We next analyze the final hop fusion induced by TAA.
For notational simplicity, we state the result for one node; since Eq.~\eqref{eq:taa_output} is applied row-wise, the same conclusion holds independently for every node.
Let $\mathbf{g}_{u}^{(k)}\in\mathbb{R}^{d}$ denote the $k$-th hop slice of modality $u$ after trajectory interaction, and let $a_{u}^{(k)}$ be the corresponding hop weight produced by Eq.~\eqref{eq:taa_gate}.
Then Eq.~\eqref{eq:taa_output} becomes
\begin{equation}
\mathbf{z}_{u}=\sum_{k=0}^{K} a_{u}^{(k)}\mathbf{g}_{u}^{(k)}, \quad u\in\{t,i\},
\label{eq:theory_taa_node}
\end{equation}
where $a_{u}^{(k)}\ge 0$ and $\sum_{k=0}^{K} a_{u}^{(k)}=1$ because the weights are normalized by softmax.

\begin{proposition}[Bounded and discrepancy-controlled fusion]
\label{prop:taa_bound}
For each modality $u\in\{t,i\}$, the fused representation $\mathbf{z}_{u}$ is a convex combination of $\{\mathbf{g}_{u}^{(0)},\ldots,\mathbf{g}_{u}^{(K)}\}$.
Moreover, the cross-modal discrepancy after hop fusion obeys
\begin{equation}
\|\mathbf{z}_{t}-\mathbf{z}_{i}\|_{2}
\le
\sum_{k=0}^{K} a_{t}^{(k)}\|\mathbf{g}_{t}^{(k)}-\mathbf{g}_{i}^{(k)}\|_{2}
+
\sum_{k=0}^{K}|a_{t}^{(k)}-a_{i}^{(k)}|\,\|\mathbf{g}_{i}^{(k)}\|_{2}.
\label{eq:theory_taa_gap}
\end{equation}
The same bound also holds after swapping the roles of $t$ and $i$.
\end{proposition}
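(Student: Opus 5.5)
The plan has two parts: read off convexity from the softmax normalization, then get the discrepancy bound by adding and subtracting a mixed term followed by the triangle inequality.

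\emph{Convexity.} This part is immediate. By Eq.~\eqref{eq:taa_gate}, the weights $a_u^{(k)}$ come from a softmax over the hop dimension, so $a_u^{(k)}\ge 0$ and $\sum_{k=0}^{K} a_u^{(k)}=1$. Together with Eq.~\eqref{eq:theory_taa_node}, this says exactly that $\mathbf{z}_u$ lies in the convex hull of $\{\mathbf{g}_u^{(0)},\ldots,\mathbf{g}_u^{(K)}\}$. As a by-product we get $\|\mathbf{z}_u\|_2\le\max_k\|\mathbf{g}_u^{(k)}\|_2$, which justifies the word ``bounded'' in the title.

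\emph{Discrepancy bound.} Write
\begin{equation*}
\mathbf{z}_t-\mathbf{z}_i=\sum_{k=0}^{K} a_t^{(k)}\mathbf{g}_t^{(k)}-\sum_{k=0}^{K} a_i^{(k)}\mathbf{g}_i^{(k)},
\end{equation*}
and add and subtract the mixed term $\sum_k a_t^{(k)}\mathbf{g}_i^{(k)}$. This gives
\begin{equation*}
\mathbf{z}_t-\mathbf{z}_i=\sum_{k=0}^{K} a_t^{(k)}\bigl(\mathbf{g}_t^{(k)}-\mathbf{g}_i^{(k)}\bigr)+\sum_{k=0}^{K}\bigl(a_t^{(k)}-a_i^{(k)}\bigr)\mathbf{g}_i^{(k)}.
\end{equation*}
Apply the triangle inequality and absolute homogeneity of $\|\cdot\|_2$ term by term. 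Since $a_t^{(k)}\ge 0$, the first sum yields $\sum_k a_t^{(k)}\|\mathbf{g}_t^{(k)}-\mathbf{g}_i^{(k)}\|_2$. The second sum yields $\sum_k|a_t^{(k)}-a_i^{(k)}|\,\|\mathbf{g}_i^{(k)}\|_2$. Adding the two gives Eq.~\eqref{eq:theory_taa_gap}.

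\emph{Swapped version.} Add and subtract $\sum_k a_i^{(k)}\mathbf{g}_t^{(k)}$ instead, and rewrite $\|\mathbf{z}_t-\mathbf{z}_i\|_2$ as $\|\mathbf{z}_i-\mathbf{z}_t\|_2$. The same two-step argument then gives the bound with $t$ and $i$ interchanged.

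There is no real obstacle here. The only point needing care is choosing the mixed term so that nonnegative weights multiply the first difference: this lets the absolute value be dropped there, while $|a_t^{(k)}-a_i^{(k)}|$ is retained in the second sum.
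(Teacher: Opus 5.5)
Your proposal is correct and follows essentially the same route as the paper: convexity and the bound $\|\mathbf{z}_u\|_2\le\max_k\|\mathbf{g}_u^{(k)}\|_2$ come from the softmax normalization, and the discrepancy bound comes from adding and subtracting $\sum_k a_t^{(k)}\mathbf{g}_i^{(k)}$ and applying the triangle inequality term by term. For the swapped bound you name the mixed term $\sum_k a_i^{(k)}\mathbf{g}_t^{(k)}$ explicitly, which is slightly more detailed than the paper, where the symmetric case is only said to follow from the same derivation.
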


Proposition~\ref{prop:taa_bound} clarifies the role of TAA.
The first term in Eq.~\eqref{eq:theory_taa_gap} captures the residual hop-wise mismatch after trajectory interaction, while the second term reflects the discrepancy between the two modality-specific gating distributions.
Accordingly, TAA improves fusion by jointly reducing trajectory mismatch and calibrating hop selection with the shared consistency signal.
Appendix~\ref{app:algo_complexity} further summarizes the end-to-end training procedure and computational complexity of CAMPA.

\section{Experiments}
\label{sec:exp}

We design the experiments to answer the following questions:
\textit{(Q1)} Can CAMPA consistently improve graph-centric prediction quality over both coupled and decoupled baselines?
\textit{(Q2)} Can the learned graph-aware multimodal representations also benefit modality-centric understanding tasks?
\textit{(Q3)} Which components in CAMPA contribute most to the final gains?
\textit{(Q4)} Does CAMPA preserve the efficiency advantages of decoupled MAG learning?
\textit{(Q5)} How sensitive is CAMPA to the propagation depth and alignment rate?

\subsection{Experimental Setup}

\textbf{Datasets.}
We evaluate CAMPA on a diverse collection of MAG datasets spanning recommendation, social, and content understanding scenarios.
Our experiments cover eight datasets, namely \texttt{Ele-fashion}, \texttt{Bili\_music}, \texttt{RedditS}, \texttt{Movies}, \texttt{Goodreads}, \texttt{Grocery}, \texttt{Cloth}, and \texttt{SemArt}, following recent MAG benchmark protocols~\cite{ni2019justifying,zhu2025mm_graph,yan2025magb}.
Detailed statistics are provided in Appendix~\ref{app:datasets}.

\textbf{Baselines.}
We compare CAMPA with four groups of representative baselines.
\textit{(1) General graph baseline} includes GAT~\cite{velivckovic2017gat}.
\textit{(2) Early multimodal graph baselines} include MMGCN~\cite{MMGCN} and MGAT~\cite{MGAT}.
\textit{(3) Recent strong multimodal baselines} include DMGC~\cite{DMGC}, DGF~\cite{DGF}, MIG-GT~\cite{MIG-GT}, LGMRec~\cite{LGMRec}, and UniGraph2~\cite{he2025unigraph2}.
\textit{(4) Decoupled baselines} include SIGN~\cite{frasca2020sign}, MSGC~\cite{wu2019sgc}, and MGDN~\cite{hu2024mgdcf}, where MSGC denotes our multimodal adaptation of the classic SGC-style decoupled pipeline.
Detailed descriptions of the compared baselines are provided in Appendix~\ref{app:baselines}.

\textbf{Evaluation Protocol.}
We follow standard task-specific protocols in recent MAG studies and report both graph-centric and modality-centric metrics accordingly.
All methods use the same data split and modality inputs on each dataset, while the detailed metric definitions, environment configuration, and hyperparameter settings are deferred to Appendices~\ref{app:eval_protocol} and~\ref{app:exp_settings}.

\subsection{Main Results}

To answer \textbf{Q1} and \textbf{Q2}, we report representative results in the main text and defer broader benchmark tables to Appendix~\ref{app:full_results}.
The two main tables jointly cover all eight datasets in our experiments.

\begin{table*}[t]
\centering
\caption{Performance comparison on representative graph-centric tasks. The best and second-best results are highlighted in \textbf{bold} and \underline{underline}, respectively.}
\label{tab:graph_results}
\footnotesize
\renewcommand{\arraystretch}{1.12}
\resizebox{\textwidth}{!}{
\setlength{\tabcolsep}{2.0mm}{
\begin{tabular}{l!{\vrule width 0.1pt}cc!{\vrule width 0.1pt}cc!{\vrule width 0.1pt}cc!{\vrule width 0.1pt}cc}
\Xhline{1pt}
\rowcolor{teal!72!black}
\multicolumn{1}{c!{\vrule width 0.1pt}}{\textbf{\textcolor{white}{Tasks}}} & \multicolumn{4}{c!{\vrule width 0.1pt}}{\textbf{\textcolor{white}{Node Classification}}} & \multicolumn{2}{c!{\vrule width 0.1pt}}{\textbf{\textcolor{white}{Link Prediction}}} & \multicolumn{2}{c}{\textbf{\textcolor{white}{Node Clustering}}} \\
\hline
\rowcolor{teal!12}
 & \multicolumn{2}{c!{\vrule width 0.1pt}}{\textbf{Goodreads}} & \multicolumn{2}{c!{\vrule width 0.1pt}}{\textbf{Grocery}} & \multicolumn{2}{c!{\vrule width 0.1pt}}{\textbf{Cloth}} & \multicolumn{2}{c}{\textbf{Movies}} \\
\cline{2-9}
\rowcolor{teal!12}
\multirow{-2}{*}{\diagbox[width=8.2em,height=2.3em]{\textbf{Methods}}{\textbf{Datasets}}}
 & Acc & F1 & Acc & F1 & MRR & Hits@3 & NMI & ARI \\
\hline
GAT & 58.94$_{\pm 0.23}$ & 53.02$_{\pm 0.18}$ & 81.01$_{\pm 0.21}$ & 72.94$_{\pm 0.24}$ & 42.89$_{\pm 0.25}$ & 51.05$_{\pm 0.28}$ & 40.21$_{\pm 0.28}$ & 31.48$_{\pm 0.33}$ \\
\hline
\rowcolor{teal!5} MMGCN & 60.11$_{\pm 0.22}$ & 54.56$_{\pm 0.24}$ & 81.44$_{\pm 0.24}$ & 73.46$_{\pm 0.19}$ & 44.11$_{\pm 0.23}$ & 52.13$_{\pm 0.26}$ & 44.15$_{\pm 0.39}$ & 34.02$_{\pm 0.30}$ \\
MGAT & 62.11$_{\pm 0.20}$ & 55.82$_{\pm 0.22}$ & 83.10$_{\pm 0.22}$ & \underline{75.21$_{\pm 0.25}$} & 45.41$_{\pm 0.30}$ & 54.37$_{\pm 0.24}$ & 46.15$_{\pm 0.36}$ & 37.81$_{\pm 0.40}$ \\
\hline
\rowcolor{teal!5} SIGN & 68.44$_{\pm 0.23}$ & 65.03$_{\pm 0.22}$ & 82.97$_{\pm 0.21}$ & 74.33$_{\pm 0.23}$ & 53.37$_{\pm 0.20}$ & 62.89$_{\pm 0.22}$ & 52.07$_{\pm 0.21}$ & 44.93$_{\pm 0.22}$ \\
MSGC & 64.35$_{\pm 0.19}$ & 61.47$_{\pm 0.21}$ & 81.96$_{\pm 0.22}$ & 73.98$_{\pm 0.22}$ & 51.01$_{\pm 0.23}$ & 59.82$_{\pm 0.27}$ & 49.72$_{\pm 0.22}$ & 41.50$_{\pm 0.22}$ \\
\rowcolor{teal!5} MGDN & 67.52$_{\pm 0.22}$ & 64.46$_{\pm 0.22}$ & 82.41$_{\pm 0.22}$ & 74.02$_{\pm 0.22}$ & 53.16$_{\pm 0.22}$ & 61.74$_{\pm 0.22}$ & 51.44$_{\pm 0.22}$ & 44.13$_{\pm 0.22}$ \\
\hline
DMGC & 65.18$_{\pm 0.21}$ & 62.86$_{\pm 0.23}$ & 81.35$_{\pm 0.23}$ & 71.31$_{\pm 0.18}$ & 53.21$_{\pm 0.22}$ & 61.98$_{\pm 0.25}$ & 51.39$_{\pm 0.37}$ & 44.31$_{\pm 0.28}$ \\
\rowcolor{teal!5} DGF & 70.21$_{\pm 0.19}$ & 64.05$_{\pm 0.22}$ & \underline{83.31$_{\pm 0.22}$} & 74.01$_{\pm 0.24}$ & 53.19$_{\pm 0.29}$ & 62.21$_{\pm 0.23}$ & \underline{52.18$_{\pm 0.34}$} & 45.01$_{\pm 0.39}$ \\
MIG-GT & 70.16$_{\pm 0.25}$ & 66.35$_{\pm 0.20}$ & 82.46$_{\pm 0.20}$ & 74.43$_{\pm 0.22}$ & 57.14$_{\pm 0.27}$ & 64.93$_{\pm 0.30}$ & 47.31$_{\pm 0.31}$ & 39.71$_{\pm 0.36}$ \\
\rowcolor{teal!5} LGMRec & 69.54$_{\pm 0.23}$ & 64.99$_{\pm 0.18}$ & 80.41$_{\pm 0.18}$ & 73.05$_{\pm 0.21}$ & \underline{59.25$_{\pm 0.25}$} & 48.31$_{\pm 0.28}$ & 52.01$_{\pm 0.28}$ & 44.48$_{\pm 0.33}$ \\
UniGraph2 & \underline{74.31$_{\pm 0.22}$} & \textbf{67.43$_{\pm 0.24}$} & 80.23$_{\pm 0.24}$ & 71.98$_{\pm 0.19}$ & 57.43$_{\pm 0.23}$ & \underline{65.17$_{\pm 0.26}$} & 51.83$_{\pm 0.39}$ & \underline{46.92$_{\pm 0.30}$} \\
\hline
\rowcolor{teal!5} CAMPA & \textbf{76.93$_{\pm 0.20}$} & \underline{67.29$_{\pm 0.22}$} & \textbf{84.98$_{\pm 0.21}$} & \textbf{81.01$_{\pm 0.23}$} & \textbf{60.33$_{\pm 0.30}$} & \textbf{68.17$_{\pm 0.24}$} & \textbf{56.53$_{\pm 0.36}$} & \textbf{48.93$_{\pm 0.40}$} \\
\Xhline{1pt}
\end{tabular}
}}
\end{table*}

\textbf{Graph-centric tasks.}
We first examine whether CAMPA improves graph-aware representations on graph-centric benchmarks.
These tasks directly test whether the learned embeddings preserve both structural signals and multimodal semantics.
Table~\ref{tab:graph_results} shows that CAMPA achieves the strongest overall performance across the selected settings.
For node classification, CAMPA attains the best accuracy on both \texttt{Goodreads} and \texttt{Grocery}, while also delivering the best F1 on \texttt{Grocery} and competitive F1 on \texttt{Goodreads}.
For link prediction on \texttt{Cloth}, CAMPA improves the best previous MRR from 59.25 to 60.33 and the best previous Hits@3 from 65.17 to 68.17.
For node clustering on \texttt{Movies}, CAMPA achieves the highest NMI and ARI, indicating that aligned propagation and trajectory-aware aggregation produce more coherent latent structures.

\begin{table*}[t]
\centering
\caption{Performance comparison on representative modality-centric tasks.}
\label{tab:modality_results}
\footnotesize
\renewcommand{\arraystretch}{1.12}
\resizebox{\textwidth}{!}{
\setlength{\tabcolsep}{2.0mm}{
\begin{tabular}{l!{\vrule width 0.1pt}cc!{\vrule width 0.1pt}cc!{\vrule width 0.1pt}cc!{\vrule width 0.1pt}cc}
\Xhline{1pt}
\rowcolor{teal!72!black}
\multicolumn{1}{c!{\vrule width 0.1pt}}{\textbf{\textcolor{white}{Tasks}}} & \multicolumn{4}{c!{\vrule width 0.1pt}}{\textbf{\textcolor{white}{Modality Retrieval}}} & \multicolumn{4}{c}{\textbf{\textcolor{white}{G2Image}}} \\
\hline
\rowcolor{teal!12}
 & \multicolumn{2}{c!{\vrule width 0.1pt}}{\textbf{Ele-fashion}} & \multicolumn{2}{c!{\vrule width 0.1pt}}{\textbf{RedditS}} & \multicolumn{2}{c!{\vrule width 0.1pt}}{\textbf{Bili\_music}} & \multicolumn{2}{c}{\textbf{SemArt}} \\
\cline{2-9}
\rowcolor{teal!12}
\multirow{-2}{*}{\diagbox[width=8.2em,height=2.3em]{\textbf{Methods}}{\textbf{Datasets}}}
 & T2I-MRR & I2T-MRR & T2I-MRR & I2T-MRR & CLIP & DINOv2 & CLIP & DINOv2 \\
\hline
GAT & 96.82$_{\pm 0.10}$ & 96.50$_{\pm 0.13}$ & 77.51$_{\pm 0.15}$ & 77.23$_{\pm 0.10}$ & 49.42$_{\pm 0.16}$ & 21.00$_{\pm 0.19}$ & 53.21$_{\pm 0.14}$ & 39.46$_{\pm 0.15}$ \\
\hline
\rowcolor{teal!5} MMGCN & 97.16$_{\pm 0.16}$ & 97.42$_{\pm 0.11}$ & 83.69$_{\pm 0.14}$ & 83.94$_{\pm 0.16}$ & 49.33$_{\pm 0.24}$ & 16.37$_{\pm 0.17}$ & 57.51$_{\pm 0.10}$ & 42.38$_{\pm 0.18}$ \\
MGAT & 98.35$_{\pm 0.15}$ & 98.81$_{\pm 0.17}$ & 85.58$_{\pm 0.12}$ & 85.76$_{\pm 0.15}$ & 49.56$_{\pm 0.21}$ & 19.36$_{\pm 0.25}$ & 58.31$_{\pm 0.14}$ & 43.51$_{\pm 0.19}$ \\
\hline
\rowcolor{teal!5} SIGN & 98.41$_{\pm 0.16}$ & 99.21$_{\pm 0.15}$ & 97.42$_{\pm 0.20}$ & 98.12$_{\pm 0.13}$ & 51.12$_{\pm 0.14}$ & 22.38$_{\pm 0.23}$ & 62.36$_{\pm 0.11}$ & 47.33$_{\pm 0.17}$ \\
MSGC & 97.32$_{\pm 0.13}$ & 97.74$_{\pm 0.17}$ & 92.42$_{\pm 0.15}$ & 92.33$_{\pm 0.20}$ & 49.35$_{\pm 0.21}$ & 20.14$_{\pm 0.10}$ & 59.87$_{\pm 0.15}$ & 46.15$_{\pm 0.17}$ \\
\rowcolor{teal!5} MGDN & 98.92$_{\pm 0.13}$ & 99.10$_{\pm 0.15}$ & 95.23$_{\pm 0.19}$ & 94.92$_{\pm 0.11}$ & 50.86$_{\pm 0.10}$ & 21.74$_{\pm 0.15}$ & 61.82$_{\pm 0.17}$ & 46.52$_{\pm 0.16}$ \\
\hline
DMGC & 95.40$_{\pm 0.15}$ & 94.83$_{\pm 0.10}$ & 90.36$_{\pm 0.13}$ & 90.24$_{\pm 0.15}$ & 49.62$_{\pm 0.23}$ & 22.23$_{\pm 0.16}$ & 62.15$_{\pm 0.15}$ & 48.71$_{\pm 0.15}$ \\
\rowcolor{teal!5} DGF & 95.50$_{\pm 0.14}$ & 96.33$_{\pm 0.16}$ & 97.88$_{\pm 0.11}$ & 97.74$_{\pm 0.14}$ & 47.35$_{\pm 0.20}$ & 18.36$_{\pm 0.24}$ & 64.50$_{\pm 0.11}$ & 48.83$_{\pm 0.15}$ \\
MIG-GT & 95.04$_{\pm 0.12}$ & 94.50$_{\pm 0.15}$ & 92.60$_{\pm 0.17}$ & 92.87$_{\pm 0.12}$ & 52.58$_{\pm 0.18}$ & \textbf{25.49$_{\pm 0.21}$} & \underline{65.45$_{\pm 0.15}$} & \underline{48.93$_{\pm 0.19}$} \\
\rowcolor{teal!5} LGMRec & \textbf{99.80$_{\pm 0.10}$} & \textbf{99.75$_{\pm 0.13}$} & \underline{99.84$_{\pm 0.15}$} & \textbf{99.87$_{\pm 0.10}$} & \underline{52.83$_{\pm 0.16}$} & 21.34$_{\pm 0.19}$ & 64.67$_{\pm 0.15}$ & 48.12$_{\pm 0.15}$ \\
UniGraph2 & 96.46$_{\pm 0.12}$ & 96.52$_{\pm 0.23}$ & 97.83$_{\pm 0.15}$ & 97.71$_{\pm 0.21}$ & 49.51$_{\pm 0.18}$ & 20.86$_{\pm 0.16}$ & 64.77$_{\pm 0.19}$ & 47.33$_{\pm 0.15}$ \\
\hline
\rowcolor{teal!5} CAMPA & \underline{99.69$_{\pm 0.15}$} & \underline{99.50$_{\pm 0.17}$} & \textbf{99.89$_{\pm 0.12}$} & \underline{99.79$_{\pm 0.15}$} & \textbf{53.16$_{\pm 0.21}$} & \underline{23.24$_{\pm 0.25}$} & \textbf{66.41$_{\pm 0.13}$} & \textbf{49.92$_{\pm 0.20}$} \\
\Xhline{1pt}
\end{tabular}
}}
\end{table*}

\textbf{Modality-centric tasks.}
Beyond graph-centric prediction, we further investigate whether the graph-aware multimodal representations learned by CAMPA can benefit downstream cross-modal understanding.
These tasks directly reflect whether the learned node representations are semantically coordinated across modalities rather than optimized only for structural prediction.
Table~\ref{tab:modality_results} shows that the benefits of CAMPA extend beyond graph-centric benchmarks.
For retrieval, CAMPA remains highly competitive on \texttt{Ele-fashion} and achieves the best T2I-MRR on \texttt{RedditS}, indicating that the learned graph-aware embeddings preserve strong bidirectional cross-modal correspondence.
For graph-to-image generation, CAMPA achieves the best CLIP score on both \texttt{Bili\_music} and \texttt{SemArt}, while also obtaining the best DINOv2 score on \texttt{SemArt}.
These results together show that CAMPA improves not only graph prediction but also modality-level understanding and generation quality.

\subsection{Ablation Study}
To answer \textbf{Q3}, we conduct a comprehensive ablation study.
The core question is whether stage-wise alignment is necessary at both stages, and whether the detailed design of trajectory interaction and modality-specific hop fusion is justified.
We consider the following variants, where SA and CA denote self-attention and cross-attention in TAA, respectively:
\textit{(1)} w/o CAP, where cross-modal aligned propagation is removed;
\textit{(2)} w/o TAA, where trajectory aligned aggregation is replaced by a simple trajectory fusion module;
\textit{(3)} w/o SA, where intra-modal trajectory encoding is removed;
\textit{(4)} w/o CA, where inter-modal trajectory interaction is removed;
\textit{(5)} full CAMPA.

\begin{table*}[t]
\centering
\caption{Ablation study on representative tasks.}
\label{tab:ablation}
\footnotesize
\renewcommand{\arraystretch}{1.12}
\resizebox{\textwidth}{!}{
\setlength{\tabcolsep}{2.0mm}{
\begin{tabular}{l!{\vrule width 0.1pt}cc!{\vrule width 0.1pt}cc!{\vrule width 0.1pt}cc!{\vrule width 0.1pt}cc}
\Xhline{1pt}
\rowcolor{teal!72!black}
\multicolumn{1}{c!{\vrule width 0.1pt}}{\textbf{\textcolor{white}{Tasks}}} & \multicolumn{2}{c!{\vrule width 0.1pt}}{\textbf{\textcolor{white}{Node Classification}}} & \multicolumn{2}{c!{\vrule width 0.1pt}}{\textbf{\textcolor{white}{Link Prediction}}} & \multicolumn{2}{c!{\vrule width 0.1pt}}{\textbf{\textcolor{white}{Modality Retrieval}}} & \multicolumn{2}{c}{\textbf{\textcolor{white}{G2Image}}} \\
\hline
\rowcolor{teal!12}
\textbf{Variants} & \multicolumn{2}{c!{\vrule width 0.1pt}}{\textbf{Goodreads}} & \multicolumn{2}{c!{\vrule width 0.1pt}}{\textbf{Cloth}} & \multicolumn{2}{c!{\vrule width 0.1pt}}{\textbf{RedditS}} & \multicolumn{2}{c}{\textbf{Bili\_music}} \\
\rowcolor{teal!12}
 & Acc & F1 & MRR & Hits@3 & T2I-MRR & I2T-MRR & CLIP & DINOv2 \\
\hline
w/o CAP & 71.18$_{\pm 0.24}$ & 65.31$_{\pm 0.23}$ & 56.52$_{\pm 0.19}$ & 63.33$_{\pm 0.25}$ & 96.33$_{\pm 0.16}$ & 96.72$_{\pm 0.22}$ & 50.42$_{\pm 0.20}$ & 20.95$_{\pm 0.21}$ \\
\rowcolor{teal!5} w/o TAA & 72.66$_{\pm 0.27}$ & 65.76$_{\pm 0.33}$ & 57.11$_{\pm 0.25}$ & 63.91$_{\pm 0.19}$ & 97.52$_{\pm 0.17}$ & 97.14$_{\pm 0.21}$ & 51.23$_{\pm 0.19}$ & 21.98$_{\pm 0.23}$ \\
w/o SA & 73.31$_{\pm 0.23}$ & 66.30$_{\pm 0.20}$ & 58.61$_{\pm 0.22}$ & 64.91$_{\pm 0.16}$ & 98.16$_{\pm 0.20}$ & 98.35$_{\pm 0.21}$ & 52.46$_{\pm 0.24}$ & 22.63$_{\pm 0.28}$ \\
\rowcolor{teal!5} w/o CA & 72.98$_{\pm 0.27}$ & 66.63$_{\pm 0.22}$ & 58.32$_{\pm 0.20}$ & 64.33$_{\pm 0.16}$ & 97.96$_{\pm 0.18}$ & 98.02$_{\pm 0.20}$ & 51.69$_{\pm 0.21}$ & 22.01$_{\pm 0.19}$ \\
\hline
\rowcolor{teal!5} CAMPA & \textbf{76.93$_{\pm 0.20}$} & \textbf{67.29$_{\pm 0.22}$} & \textbf{60.33$_{\pm 0.30}$} & \textbf{68.17$_{\pm 0.24}$} & \textbf{99.89$_{\pm 0.12}$} & \textbf{99.79$_{\pm 0.15}$} & \textbf{53.16$_{\pm 0.21}$} & \textbf{23.24$_{\pm 0.25}$} \\
\Xhline{1pt}
\end{tabular}
}
}
\end{table*}

The results in Table~\ref{tab:ablation} show that removing either CAP or TAA degrades all metrics, confirming that modal conflict should be handled in both propagation and aggregation stages.
Among the variants, w/o CAP causes the largest average drop, including 5.75 in accuracy on \texttt{Goodreads} and 4.84 in Hits@3 on \texttt{Cloth}.
This suggests that injecting cross-modal semantic priors during diffusion is critical for preventing modality discrepancy from being amplified along multi-hop propagation.
w/o TAA also clearly underperforms full CAMPA, indicating that aligned propagation trajectories still need trajectory-level interaction before fusion.
In addition, removing self-attention or cross-attention in TAA consistently weakens the results: w/o SA mainly affects graph-centric metrics, while w/o CA produces larger average degradation and is more harmful to modality-centric tasks.
These observations demonstrate that CAP and TAA are complementary, with CAP constructing aligned multi-hop trajectories and TAA coordinating intra-modal and inter-modal correspondence.

\subsection{Running Efficiency}
\label{sec:efficiency}

To answer \textbf{Q4}, efficiency is a key motivation for using a decoupled architecture.
Fig.~\ref{fig:efficiency} compares the performance-time trajectories of CAMPA and representative baselines on four dataset-task pairs, where the horizontal axis records wall-clock training time and the vertical axis reports the corresponding task metric.
Across all tasks, CAMPA rapidly enters the high-performance region and remains the best or highly competitive throughout training.
In contrast, SIGN is lightweight but saturates at a lower level, while stronger multimodal baselines such as UniGraph2 and MIG-GT generally require longer training and often remain below CAMPA under comparable time budgets.

\begin{figure}[t]
\centering
\includegraphics[width=\linewidth]{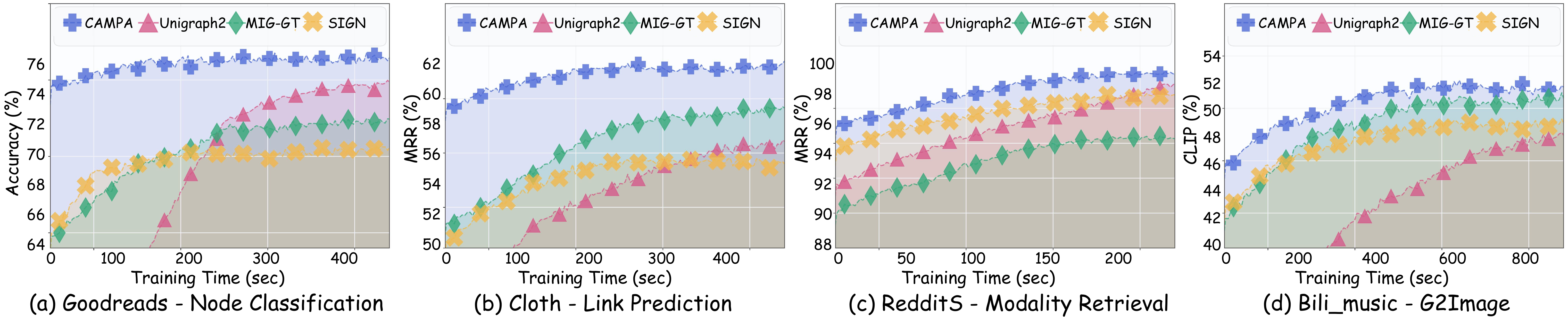}
\vspace{-0.5cm}
\caption{Efficiency analysis on four representative dataset-task pairs.
Each subplot reports task performance against wall-clock training time.
CAMPA consistently achieves a favorable performance-time trade-off across graph-centric and modality-centric tasks.}
\label{fig:efficiency}
\end{figure}

These results confirm that CAMPA preserves the lightweight optimization pattern of decoupled MAG models while substantially improving their representational quality.
The advantage is especially meaningful because CAMPA does not obtain higher accuracy by relying on a deeply coupled propagation process; instead, it injects alignment at the propagation and aggregation stages with limited additional cost.
Therefore, CAMPA provides a practical efficiency-performance trade-off for scalable MAG learning, complementing the complexity analysis in Appendix~\ref{app:algo_complexity}.

\subsection{Hyperparameter Sensitivity}

To answer \textbf{Q5}, we examine the sensitivity of CAMPA to two key hyperparameters: the propagation depth $K$ and the alignment rate $\beta$ that controls the strength of cross-modal propagation.
Fig.~\ref{fig:robustness} reports heatmaps on four representative dataset-task pairs, covering both graph-centric and modality-centric settings.
Across all datasets, setting $\beta=0$ usually leads to weaker performance, confirming that cross-modal propagation is important for effective decoupled MAG learning.
Increasing $\beta$ to a moderate positive range consistently improves the results, while the best region is generally broad rather than concentrated at a single configuration.
The effect of $K$ shows a similar pattern: shallow propagation may miss useful higher-order context, whereas overly deep propagation can introduce redundant or noisy signals.
Overall, CAMPA performs stably around $K=3$ or $4$ with $\beta\in[0.1,0.3]$, indicating that its gains do not rely on a narrow hyperparameter choice.

\begin{figure}[t]
\centering
\includegraphics[width=\linewidth]{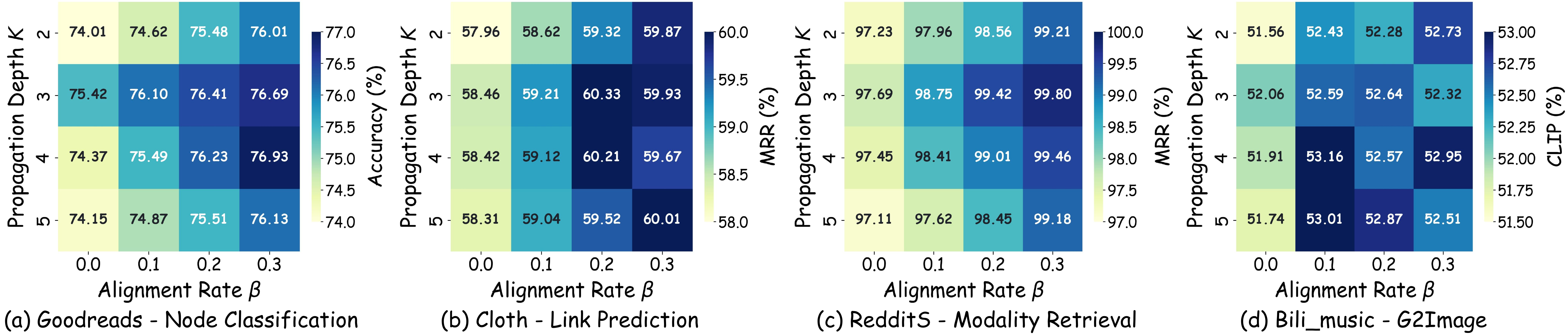}
\vspace{-0.5cm}
\caption{Hyperparameter sensitivity with respect to propagation depth $K$ and alignment rate $\beta$.
Each heatmap reports the task metric on one representative dataset-task pair.
CAMPA achieves stable performance across a broad range of moderate alignment rates and propagation depths.}
\label{fig:robustness}
\end{figure}

\section{Conclusion}

In this work, we propose CAMPA, a decoupled MAG framework that addresses modal conflict at both the propagation and aggregation stages.
Motivated by our empirical observations that decoupled MAG learning is scalable yet prone to insufficient cross-modal coordination, CAMPA introduces CAP to inject cross-modal semantic priors into graph diffusion and TAA to align multi-hop trajectories before hop fusion.
This design preserves the efficiency advantages of decoupled propagation while improving modality alignment and downstream representation quality.
Theoretical analysis further supports the stability of aligned propagation and the discrepancy-controlled behavior of trajectory fusion.
Extensive experiments across graph-centric and modality-centric tasks demonstrate that CAMPA consistently improves over coupled, decoupled, and recent multimodal graph baselines, while retaining favorable efficiency.
These results highlight stage-wise alignment as an effective principle for scalable multimodal graph learning.
Future work may extend CAMPA to richer modality combinations, stronger generative settings, and more adaptive large-scale MAG benchmarks.

\newpage

\bibliographystyle{plain}
\bibliography{reference}

%%%%%%%%%%%%%%%%%%%%%%%%%%%%%%%%%%%%%%%%%%%%%%%%%%%%%%%%%%%%

\newpage

\appendix

\section{Supplementary Empirical Analysis}
\label{app:empirical}

This appendix supplements the empirical studies in Sec.~\ref{sec:intro} and provides additional details on the efficiency comparison between coupled and decoupled architectures as well as the modal conflict phenomenon in decoupled multimodal graph learning.

\subsection{Setup of Empirical Studies}
\label{app:empirical_setup}

\textbf{Empirical Study (a): Coupled vs. Decoupled Architectures.}
We compare representative coupled baselines, namely MGAT and MMGCN, against representative decoupled baselines, including MSGC, SIGN, and MGDN, on the Toys dataset under the node classification setting.
MSGC is our in-house multimodal adaptation of SGC~\cite{wu2019sgc}.
Specifically, we extend the decoupled propagate-then-classify paradigm to the multimodal setting by performing modality-wise graph propagation on text and image features, followed by a shared prediction module for downstream classification.
For fairness, MSGC uses the same input features, training budget, and evaluation protocol as the other baselines.
The Toys dataset is extracted from the "Toys \& Games" category of Amazon2018~\cite{ni2019justifying}.
All methods use the same input modalities, train/validation/test split, and evaluation protocol, while their training time is recorded under the same hardware environment.
We report the accuracy-time trajectories throughout training to jointly examine predictive performance and computational efficiency.

\textbf{Empirical Study (b): Stage-wise Alignment Analysis.}
To investigate whether modal conflict is an intrinsic issue in decoupled pipelines, we augment representative decoupled baselines with lightweight alignment modules at different stages, including propagation-stage alignment (AlignP), aggregation-stage alignment (AlignA), and their combination.
Consistent improvements brought by these lightweight alignment modules would indicate that the corresponding stage indeed suffers from insufficient multimodal interaction.

\textbf{Propagation-stage alignment (AlignP).}
Let $\{\mathbf{H}_{t}^{(k)}\}_{k=0}^{K}$ and $\{\mathbf{H}_{i}^{(k)}\}_{k=0}^{K}$ denote the text and image trajectories produced by a generic decoupled baseline.
AlignP injects lightweight same-hop cross-modal correction during propagation:
\begin{equation}
\widehat{\mathbf{H}}_{t}^{(k)}=
(1-\lambda_{p})\mathbf{H}_{t}^{(k)}
+ \lambda_{p}\mathbf{P}_{i\rightarrow t}\mathbf{H}_{i}^{(k)},
\quad
\widehat{\mathbf{H}}_{i}^{(k)}=
(1-\lambda_{p})\mathbf{H}_{i}^{(k)}
+ \lambda_{p}\mathbf{P}_{t\rightarrow i}\mathbf{H}_{t}^{(k)},
\label{eq:empirical_alignp}
\end{equation}
where $\mathbf{P}_{i\rightarrow t}$ and $\mathbf{P}_{t\rightarrow i}$ are lightweight learnable projections used only to match the feature spaces, and $\lambda_{p}$ controls the strength of propagation-stage alignment.
This design preserves the decoupled structure of the baseline while introducing a minimal form of cross-modal interaction during multi-hop diffusion.

\textbf{Aggregation-stage alignment (AlignA).}
AlignA leaves the propagation stage unchanged and introduces coordination only when the baseline aggregates the multi-hop trajectories.
Let $\mathrm{Agg}(\cdot)$ denote the original hop aggregation operator of the decoupled baseline.
We first obtain modality-specific aggregated embeddings:
\begin{equation}
\mathbf{z}_{t}=\mathrm{Agg}\!\left(\{\mathbf{H}_{t}^{(k)}\}_{k=0}^{K}\right),
\quad
\mathbf{z}_{i}=\mathrm{Agg}\!\left(\{\mathbf{H}_{i}^{(k)}\}_{k=0}^{K}\right),
\label{eq:empirical_agga}
\end{equation}
and then apply a lightweight cross-modal refinement:
\begin{equation}
\widehat{\mathbf{z}}_{t}=
(1-\lambda_{a})\mathbf{z}_{t}
+ \lambda_{a}\mathbf{Q}_{i\rightarrow t}\mathbf{z}_{i},
\quad
\widehat{\mathbf{z}}_{i}=
(1-\lambda_{a})\mathbf{z}_{i}
+ \lambda_{a}\mathbf{Q}_{t\rightarrow i}\mathbf{z}_{t},
\label{eq:empirical_aligna}
\end{equation}
where $\mathbf{Q}_{i\rightarrow t}$ and $\mathbf{Q}_{t\rightarrow i}$ are lightweight learnable projections and $\lambda_{a}$ controls the strength of aggregation-stage alignment.
The final fused representation is then formed as $\widehat{\mathbf{z}}=\widehat{\mathbf{z}}_{t}\|\widehat{\mathbf{z}}_{i}$.
Compared with AlignP, this module does not alter the diffusion process and only improves the coordination of the final multimodal summaries.

\textbf{Combined alignment.}
For the \texttt{+AlignP+AlignA} setting, we first apply AlignP to obtain aligned trajectories $\{\widehat{\mathbf{H}}_{t}^{(k)}\}$ and $\{\widehat{\mathbf{H}}_{i}^{(k)}\}$, and then apply AlignA on top of the corresponding aggregated embeddings.
This setting examines whether propagation-stage and aggregation-stage coordination are complementary.

\textbf{Visualization and measurement.}
Beyond the quantitative gains shown in Fig.~\ref{fig:empirical}(b), we further provide heatmap-based correlation analysis and t-SNE visualizations in Appendix~\ref{app:empirical_vis} to illustrate how alignment affects cross-modal consistency and latent-space geometry.

\subsection{Visual Analysis of Modal Conflict}
\label{app:empirical_vis}

To further support the empirical study (b) in Sec.~\ref{sec:intro}, we provide additional evidence on the modal conflict phenomenon.
These visualizations show how the lack of cross-modal coordination leads to representation inconsistency, and how stage-wise alignment progressively alleviates this issue.

\textbf{Visualization protocol.}
We analyze representative decoupled baselines under four settings, including the original model, propagation-stage alignment (\texttt{+AlignP}), aggregation-stage alignment (\texttt{+AlignA}), and their combination (\texttt{+AlignP+AlignA}).
For each baseline, we report two complementary forms of visualization.
The first is a heatmap of pairwise feature correlations among image, text, and fused representations, which reflects the degree of cross-modal consistency in the learned space.
The second is a t-SNE visualization of image and text embeddings, which reveals whether the two modalities remain spatially separated or become progressively aligned after introducing stage-wise coordination.

\textbf{Heatmap analysis.}
Fig.~\ref{fig:appendix1} presents the heatmap-based comparison.
The top, middle, and bottom rows correspond to MSGC, MGDN, and SIGN, respectively.
For MSGC, the original model exhibits relatively weak Image-Text correlation, whereas \texttt{AlignP} substantially improves the correlation structure and \texttt{AlignP+AlignA} yields the strongest overall consistency.
The same trend can also be observed on MGDN and SIGN, where the original model starts from a modality discrepancy, and both alignment modules improve the correlation between modality-specific and fused representations.
Overall, the combined variant consistently produces the strongest cross-modal consistency, indicating that modal conflict exists in decoupled pipelines and can be reduced by stage-wise alignment.

\begin{figure*}[t]
\centering
\includegraphics[width=\textwidth]{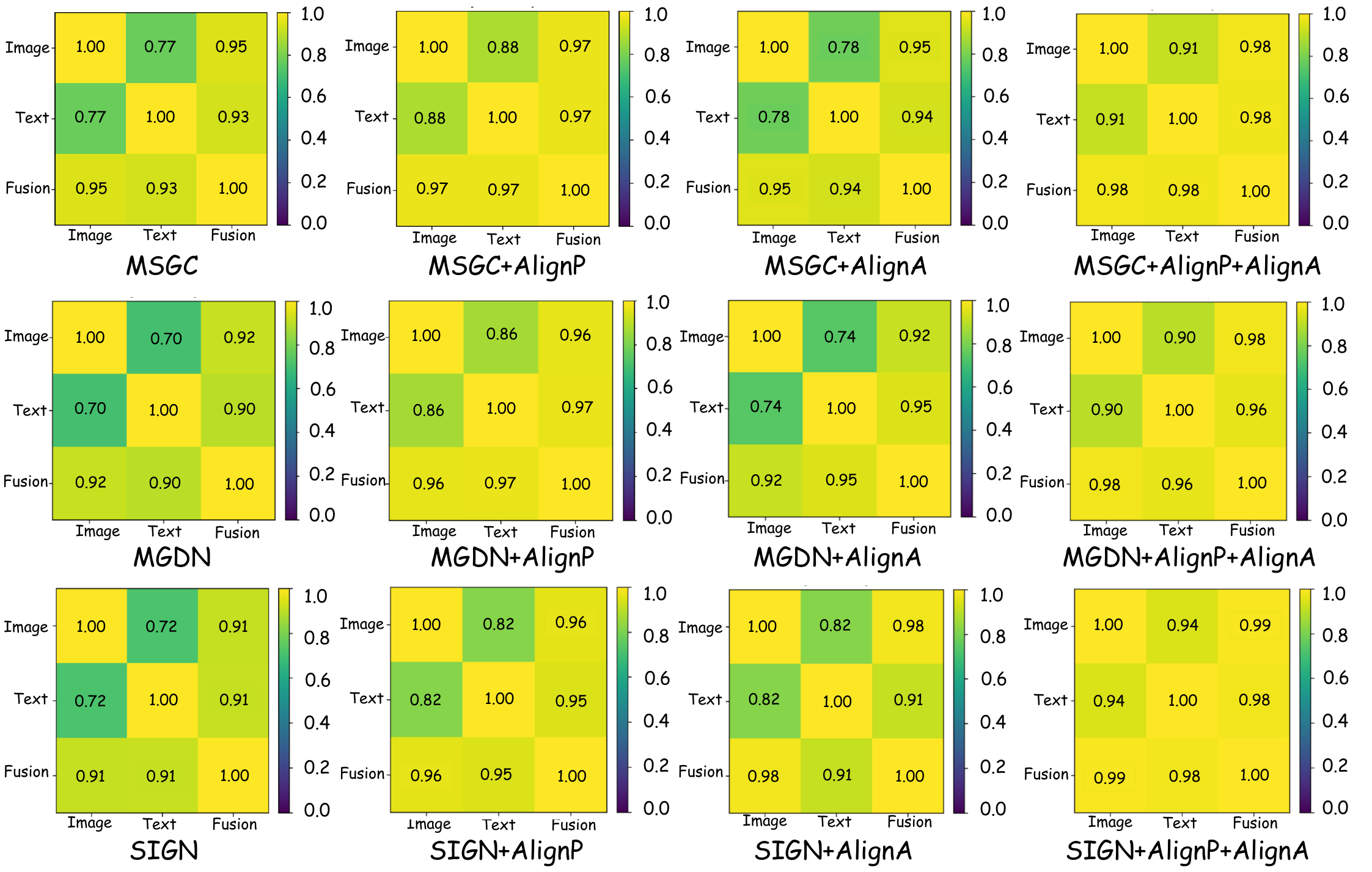}
\vspace{-0.5cm}
\caption{Heatmap visualization of cross-modal feature correlations. From top to bottom, we show the results of MSGC, MGDN, and SIGN. Within each row, (a) denotes the original decoupled baseline, (b) augments it with propagation-stage alignment, (c) augments it with aggregation-stage alignment, and (d) applies both alignment modules together.}
\label{fig:appendix1}
\end{figure*}

\textbf{T-SNE analysis.}
Fig.~\ref{fig:appendix2} further provides a geometric view of the same phenomenon.
For MSGC, MGDN, and SIGN, the original models produce clearly separated image and text clusters, suggesting that the two modalities evolve in different semantic spaces when propagated and aggregated independently.
After introducing \texttt{AlignP}, the two modality distributions become substantially more mixed, showing that propagation-stage coordination can directly reduce spatial asynchrony.
Adding \texttt{AlignA} also improves the overlap, while the full variant with \texttt{AlignP+AlignA} yields the most coherent multimodal geometry.
These observations further verify that the empirical gains in Fig.~\ref{fig:empirical}(b) originate from a genuine reduction of modal conflict rather than incidental optimization effects.

\begin{figure*}[t]
\centering
\includegraphics[width=\textwidth]{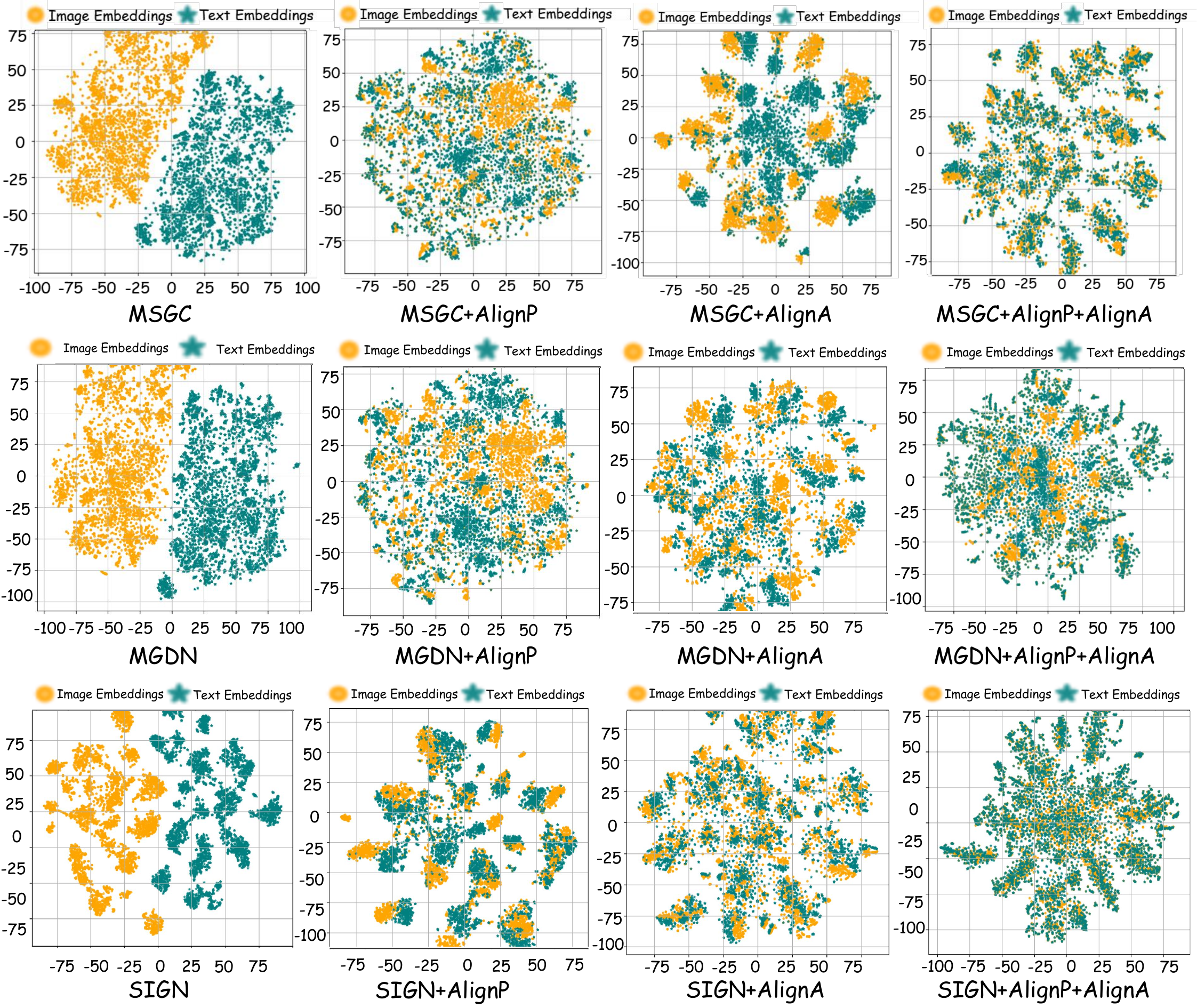}
\vspace{-0.5cm}
\caption{T-SNE visualization of image and text embeddings. From top to bottom, we show the results of MSGC, MGDN, and SIGN. The original decoupled models exhibit obvious modality separation, whereas stage-wise alignment progressively increases the overlap between image and text embeddings.}
\label{fig:appendix2}
\end{figure*}

\textbf{Discussion.}
Taken together, the heatmap and t-SNE results offer a more intuitive explanation of the empirical study in Sec.~\ref{sec:intro}.
They show that modal conflict in decoupled MGNNs is reflected not only in final performance degradation, but also in weak representation-level correlation and clear geometric separation between modalities.
More importantly, they also show that the two alignment modules play complementary roles: \texttt{AlignP} mainly improves cross-modal consistency during propagation, while \texttt{AlignA} further refines the coordination of multimodal trajectories during aggregation.
This provides direct motivation for the two-stage design of CAMPA.

\section{Proofs of Theoretical Analysis}
\label{app:theory}

This appendix provides detailed proofs for the theoretical results in Sec.~\ref{sec:theory}.
We first prove the convergence and boundedness of CAP, and then establish the bounded and discrepancy-controlled property of TAA.

\subsection{Proof of Theorem~\ref{thm:cap_stability}}

We restate the CAP recursion:
\begin{equation}
\mathbf{X}^{(k+1)}=\mathbf{P}\mathbf{X}^{(k)}+\mathbf{R}\mathbf{X}^{(0)},
\label{eq:app_cap_recursion}
\end{equation}
where $\mathbf{P}$ and $\mathbf{R}$ are defined in Eq.~\eqref{eq:theory_cap_block}.
For any block vector $\mathbf{X}=[\mathbf{X}_{t};\mathbf{X}_{i}]$, recall the norm
\[
\|\mathbf{X}\|_{2,\infty}=\max(\|\mathbf{X}_{t}\|_{F},\|\mathbf{X}_{i}\|_{F}).
\]

\begin{proof}
Since each $\widetilde{\mathbf{A}}_{uv}$ is symmetrically normalized, we have $\|\widetilde{\mathbf{A}}_{uv}\|_{2}\le 1$.
Therefore, for any $\mathbf{X}=[\mathbf{X}_{t};\mathbf{X}_{i}]$,
\begin{equation}
\mathbf{P}\mathbf{X}=
\begin{bmatrix}
\alpha_{t}\widetilde{\mathbf{A}}_{tt}\mathbf{X}_{t}
+ \beta_{t}\widetilde{\mathbf{A}}_{ti}\mathbf{X}_{i}\\
\beta_{i}\widetilde{\mathbf{A}}_{it}\mathbf{X}_{t}
+ \alpha_{i}\widetilde{\mathbf{A}}_{ii}\mathbf{X}_{i}
\end{bmatrix}.
\label{eq:app_cap_px}
\end{equation}
Applying the submultiplicativity of the Frobenius norm yields
\begin{align}
\|\alpha_{t}\widetilde{\mathbf{A}}_{tt}\mathbf{X}_{t}
+ \beta_{t}\widetilde{\mathbf{A}}_{ti}\mathbf{X}_{i}\|_{F}
&\le
\alpha_{t}\|\widetilde{\mathbf{A}}_{tt}\mathbf{X}_{t}\|_{F}
+ \beta_{t}\|\widetilde{\mathbf{A}}_{ti}\mathbf{X}_{i}\|_{F}\notag\\
&\le
\alpha_{t}\|\mathbf{X}_{t}\|_{F}
+ \beta_{t}\|\mathbf{X}_{i}\|_{F}\notag\\
&\le
(\alpha_{t}+\beta_{t})\|\mathbf{X}\|_{2,\infty},
\label{eq:app_cap_top}
\end{align}
and similarly,
\begin{equation}
\|\beta_{i}\widetilde{\mathbf{A}}_{it}\mathbf{X}_{t}
+ \alpha_{i}\widetilde{\mathbf{A}}_{ii}\mathbf{X}_{i}\|_{F}
\le
(\alpha_{i}+\beta_{i})\|\mathbf{X}\|_{2,\infty}.
\label{eq:app_cap_bottom}
\end{equation}
Combining Eqs.~\eqref{eq:app_cap_top} and \eqref{eq:app_cap_bottom}, we obtain
\begin{equation}
\|\mathbf{P}\mathbf{X}\|_{2,\infty}
\le
\rho \|\mathbf{X}\|_{2,\infty},
\quad
\rho=\max(\alpha_{t}+\beta_{t},\alpha_{i}+\beta_{i}).
\label{eq:app_cap_contraction}
\end{equation}
Hence, if $\rho<1$, $\mathbf{P}$ is a contraction under $\|\cdot\|_{2,\infty}$.

The fixed point $\mathbf{X}^{\star}$ of Eq.~\eqref{eq:app_cap_recursion} satisfies
\begin{equation}
\mathbf{X}^{\star}=\mathbf{P}\mathbf{X}^{\star}+\mathbf{R}\mathbf{X}^{(0)},
\label{eq:app_cap_fixed_1}
\end{equation}
which is equivalent to
\begin{equation}
(\mathbf{I}-\mathbf{P})\mathbf{X}^{\star}=\mathbf{R}\mathbf{X}^{(0)}.
\label{eq:app_cap_fixed_2}
\end{equation}
Since $\rho<1$, the matrix $\mathbf{I}-\mathbf{P}$ is invertible and thus the fixed point is unique:
\begin{equation}
\mathbf{X}^{\star}=(\mathbf{I}-\mathbf{P})^{-1}\mathbf{R}\mathbf{X}^{(0)}.
\label{eq:app_cap_fixed_3}
\end{equation}

Subtracting Eq.~\eqref{eq:app_cap_fixed_1} from Eq.~\eqref{eq:app_cap_recursion} gives
\begin{equation}
\mathbf{X}^{(k+1)}-\mathbf{X}^{\star}
=
\mathbf{P}\bigl(\mathbf{X}^{(k)}-\mathbf{X}^{\star}\bigr).
\label{eq:app_cap_error}
\end{equation}
Taking the block norm on both sides and using Eq.~\eqref{eq:app_cap_contraction}, we obtain
\begin{equation}
\|\mathbf{X}^{(k+1)}-\mathbf{X}^{\star}\|_{2,\infty}
\le
\rho \|\mathbf{X}^{(k)}-\mathbf{X}^{\star}\|_{2,\infty},
\label{eq:app_cap_linear}
\end{equation}
which proves the linear convergence claim in Theorem~\ref{thm:cap_stability}.

For completeness, we further bound the magnitude of the fixed point.
From Eq.~\eqref{eq:app_cap_fixed_1},
\begin{equation}
\|\mathbf{X}^{\star}\|_{2,\infty}
\le
\|\mathbf{P}\mathbf{X}^{\star}\|_{2,\infty}
+ \|\mathbf{R}\mathbf{X}^{(0)}\|_{2,\infty}
\le
\rho \|\mathbf{X}^{\star}\|_{2,\infty}
+ \max(\gamma_{t},\gamma_{i})\|\mathbf{X}^{(0)}\|_{2,\infty}.
\label{eq:app_cap_bound_1}
\end{equation}
Rearranging yields
\begin{equation}
\|\mathbf{X}^{\star}\|_{2,\infty}
\le
\frac{\max(\gamma_{t},\gamma_{i})}{1-\rho}\|\mathbf{X}^{(0)}\|_{2,\infty}.
\label{eq:app_cap_bound_2}
\end{equation}
This completes the proof.
\end{proof}

\subsection{Proof of Proposition~\ref{prop:taa_bound}}

For one node, recall that
\begin{equation}
\mathbf{z}_{u}=\sum_{k=0}^{K} a_{u}^{(k)}\mathbf{g}_{u}^{(k)}, \quad u\in\{t,i\},
\label{eq:app_taa_node}
\end{equation}
where $a_{u}^{(k)}\ge 0$ and $\sum_{k=0}^{K}a_{u}^{(k)}=1$.

\begin{proof}
Because $\{a_{u}^{(k)}\}_{k=0}^{K}$ are nonnegative and sum to one, $\mathbf{z}_{u}$ is a convex combination of $\{\mathbf{g}_{u}^{(0)},\ldots,\mathbf{g}_{u}^{(K)}\}$.
Therefore, $\mathbf{z}_{u}\in \operatorname{conv}\{\mathbf{g}_{u}^{(0)},\ldots,\mathbf{g}_{u}^{(K)}\}$.
Applying the triangle inequality gives
\begin{equation}
\|\mathbf{z}_{u}\|_{2}
=
\left\|\sum_{k=0}^{K} a_{u}^{(k)}\mathbf{g}_{u}^{(k)}\right\|_{2}
\le
\sum_{k=0}^{K} a_{u}^{(k)}\|\mathbf{g}_{u}^{(k)}\|_{2}
\le
\max_{0\le k\le K}\|\mathbf{g}_{u}^{(k)}\|_{2},
\label{eq:app_taa_norm}
\end{equation}
which establishes the boundedness of $\mathbf{z}_{u}$ as a convex combination of hop-wise representations.

Next, consider the difference between the two fused representations:
\begin{align}
\mathbf{z}_{t}-\mathbf{z}_{i}
&=
\sum_{k=0}^{K} a_{t}^{(k)}\mathbf{g}_{t}^{(k)}
- \sum_{k=0}^{K} a_{i}^{(k)}\mathbf{g}_{i}^{(k)}\notag\\
&=
\sum_{k=0}^{K} a_{t}^{(k)}\bigl(\mathbf{g}_{t}^{(k)}-\mathbf{g}_{i}^{(k)}\bigr)
+ \sum_{k=0}^{K}\bigl(a_{t}^{(k)}-a_{i}^{(k)}\bigr)\mathbf{g}_{i}^{(k)}.
\label{eq:app_taa_decomp}
\end{align}
Taking the Euclidean norm and applying the triangle inequality yields
\begin{align}
\|\mathbf{z}_{t}-\mathbf{z}_{i}\|_{2}
&\le
\left\|\sum_{k=0}^{K} a_{t}^{(k)}\bigl(\mathbf{g}_{t}^{(k)}-\mathbf{g}_{i}^{(k)}\bigr)\right\|_{2}
+ \left\|\sum_{k=0}^{K}\bigl(a_{t}^{(k)}-a_{i}^{(k)}\bigr)\mathbf{g}_{i}^{(k)}\right\|_{2}\notag\\
&\le
\sum_{k=0}^{K} a_{t}^{(k)}\|\mathbf{g}_{t}^{(k)}-\mathbf{g}_{i}^{(k)}\|_{2}
+ \sum_{k=0}^{K}|a_{t}^{(k)}-a_{i}^{(k)}|\,\|\mathbf{g}_{i}^{(k)}\|_{2},
\label{eq:app_taa_gap}
\end{align}
which is exactly Eq.~\eqref{eq:theory_taa_gap}.
The symmetric statement after swapping $t$ and $i$ follows from the same derivation.
\end{proof}

\section{Algorithm and Complexity Analysis}
\label{app:algo_complexity}

This appendix summarizes the training pipeline of CAMPA and analyzes its computational complexity.
Our goal is to make explicit which operations are precomputed in the aligned propagation stage and which operations remain in the online optimization stage.

\subsection{Pseudo-code of CAMPA}

\begin{algorithm}[t]
\caption{Training Pipeline of CAMPA}
\label{alg:campa}
\begin{algorithmic}[1]
\REQUIRE Graph adjacency $\mathbf{A}$, raw textual contents $\mathcal{C}_{t}$, raw visual contents $\mathcal{C}_{i}$, propagation depth $K$, balance coefficients $\alpha,\beta,\gamma$, task supervision
\ENSURE Trained TAA parameters and prediction head
\STATE Encode $\mathcal{C}_{t}$ and $\mathcal{C}_{i}$ with a frozen pre-trained vision-language encoder to obtain initial modality features $\mathbf{H}_{t}^{(0)}$ and $\mathbf{H}_{i}^{(0)}$.
\STATE \textbf{Offline aligned propagation}
\STATE Compute edge-wise semantic priors $\mathbf{S}_{tt}, \mathbf{S}_{ii}, \mathbf{S}_{ti}, \mathbf{S}_{it}$ and construct $\widetilde{\mathbf{A}}_{tt}, \widetilde{\mathbf{A}}_{ii}, \widetilde{\mathbf{A}}_{ti}, \widetilde{\mathbf{A}}_{it}$ by Eq.~\eqref{eq:cap_prior}.
\STATE Initialize the trajectories with $\mathbf{H}_{t}^{(0)}$ and $\mathbf{H}_{i}^{(0)}$.
\FOR{$k=0,\ldots,K-1$}
\STATE Update $\mathbf{H}_{t}^{(k+1)}$ and $\mathbf{H}_{i}^{(k+1)}$ by Eq.~\eqref{eq:cap_update}.
\ENDFOR
\STATE Stack the propagated slices to obtain $\mathbf{H}_{t}=[\mathbf{H}_{t}^{(0)},\ldots,\mathbf{H}_{t}^{(K)}]$ and $\mathbf{H}_{i}=[\mathbf{H}_{i}^{(0)},\ldots,\mathbf{H}_{i}^{(K)}]$.
\STATE \textbf{Online trajectory-aligned training}
\FOR{each training epoch}
\STATE Perform intra-modal trajectory encoding on $\mathbf{H}_{t}$ and $\mathbf{H}_{i}$ to obtain $\mathbf{F}_{t}$ and $\mathbf{F}_{i}$ by Eq.~\eqref{eq:taa_self}.
\STATE Apply inter-modal trajectory interaction to obtain $\mathbf{G}_{t}$ and $\mathbf{G}_{i}$ by Eq.~\eqref{eq:taa_cross}.
\STATE Compute hop-wise consistency $\mathbf{C}$ and modality-specific hop weights $\mathbf{A}_{t}, \mathbf{A}_{i}$ by Eqs.~\eqref{eq:taa_consistency} and \eqref{eq:taa_gate}.
\STATE Aggregate $\mathbf{Z}_{t}$ and $\mathbf{Z}_{i}$ and form $\mathbf{Z}=\mathbf{Z}_{t}\|\mathbf{Z}_{i}$ by Eq.~\eqref{eq:taa_output}.
\STATE Optimize the TAA parameters and task head with the downstream loss.
\ENDFOR
\STATE \textbf{Return} the trained CAMPA model.
\end{algorithmic}
\end{algorithm}

Algorithm~\ref{alg:campa} highlights the decoupled nature of CAMPA.
The graph-dependent aligned diffusion in CAP is executed once before training, whereas the online optimization stage only operates on the precomputed hop trajectories.
This separation is the key reason why CAMPA retains the efficiency advantage of decoupled multimodal graph learning.

\subsection{Complexity Analysis}

Let $N$ be the number of nodes, $|E|$ the number of observed edges, $d$ the hidden dimension, $K$ the propagation depth, and $B$ the batch size used in the online training stage.
For simplicity, we omit constant factors introduced by the two modalities and the four aligned propagation operators.

\textbf{CAP precomputation.}
To construct the aligned operators in Eq.~\eqref{eq:cap_prior}, CAMPA only needs semantic similarities on observed edges because the prior is multiplied by the sparse adjacency matrix $\mathbf{A}$.
Computing the edge-wise semantic priors therefore costs $\mathcal{O}(|E|d)$, while symmetric normalization of the resulting sparse operators costs $\mathcal{O}(|E|)$.
The subsequent $K$-hop aligned diffusion in Eq.~\eqref{eq:cap_update} requires $K$ sparse matrix-feature multiplications, leading to a total CAP precomputation cost of $\mathcal{O}(K|E|d)$.
Storing the propagated trajectories for both modalities requires $\mathcal{O}(N(K+1)d)$ memory.

\textbf{Online TAA training.}
Once the trajectories are precomputed, the online training stage no longer depends on the graph edges.
For a batch of $B$ nodes, the linear projections and MLP blocks in TAA cost $\mathcal{O}(B(K+1)d^{2})$, while self-attention and cross-attention along the hop dimension cost $\mathcal{O}(B(K+1)^{2}d)$.
The consistency scoring, gating, and weighted hop aggregation introduce an additional $\mathcal{O}(B(K+1)d)$ cost.
Therefore, the per-epoch online complexity of CAMPA can be summarized as $\mathcal{O}(B(K+1)d^{2} + B(K+1)^{2}d)$.
Since $K$ is typically small and fixed, the online training cost is independent of $|E|$ and scales only with the trajectory length and hidden dimension.

\textbf{Overall efficiency advantage.}
Overall, the total cost of CAMPA consists of a one-time graph-dependent precomputation term and an edge-independent online optimization term.
This is fundamentally different from coupled message-passing MGNNs, whose per-epoch training cost usually scales with repeated neighborhood aggregation over edges.
By moving aligned diffusion to the precomputation stage, CAMPA preserves the efficiency benefits of decoupled training while still introducing explicit cross-modal coordination in both propagation and aggregation.

\section{Dataset Description}
\label{app:datasets}

We evaluate CAMPA on eight multimodal graph datasets spanning recommendation, social, media, music, art, and book domains.
Table~\ref{tab:dataset_stats_app} summarizes their modality composition, graph scale, label cardinality, representative task scope, and application domain used in our experiments.

\begin{table*}[h]
\centering
\caption{Statistics of the eight MAG datasets used in our experiments.}
\label{tab:dataset_stats_app}
\small
\setlength{\tabcolsep}{5pt}
\resizebox{\textwidth}{!}{
\begin{tabular}{lcccccc}
\toprule
Dataset & Modalities & \#Nodes & \#Edges & \#Classes & Task Scope & Domain \\
\midrule
\texttt{Ele-fashion} & Text + Image & 97,766  & 199,602   & 12 & Graph + Modality & Recommendation \\
\texttt{Bili\_music} & Text + Image & 6,038   & 21,592    & - & Graph + Modality & Video Recommendation \\
\texttt{RedditS}     & Text + Image & 15,894  & 566,160   & 20 & Graph + Modality & Social Network \\
\texttt{Movies}      & Text + Image & 16,672  & 218,390   & 20 & Graph + Modality & Movie Network \\
\texttt{Goodreads}   & Text + Image & 685,294 & 7,235,048 & 11 & Graph            & Book Network \\
\texttt{Grocery}     & Text + Image & 17,074  & 171,340   & 20 & Graph + Modality & Recommendation \\
\texttt{Cloth}       & Text + Image & 125,839 & 951,271   & - & Graph + Modality & Recommendation \\
\texttt{SemArt}      & Text + Image & 21,382  & 1,216,432 & - & Graph + Modality & Art Network \\
\bottomrule
\end{tabular}
}
\end{table*}

Below we briefly describe each dataset used in our benchmark.

\textbf{\texttt{Ele-fashion}}~\cite{ni2019_Grocery_Cloth_Ele_Movies_Sports,hou2024_Cloth_Ele_Sports}
is a heterogeneous product graph that combines Amazon electronics and fashion items.
Nodes represent products and edges capture cross-category co-purchasing relations, revealing latent consumer preferences across distinct domains.
Each node is associated with textual descriptions and product images, and the dataset supports both graph-centric and modality-centric evaluation.

\textbf{\texttt{Bili\_music}}~\cite{wan2026openmag}
is a music-content multimodal graph constructed from video or music-related entities.
Its graph structure reflects content association and user-consumption correlation, while node attributes combine textual metadata with visual content.
We mainly use this dataset to evaluate graph-aware multimodal understanding and retrieval behavior in content-oriented scenarios.

\textbf{\texttt{RedditS}}~\cite{desai1_RedditS}
is a social multimodal graph in which nodes correspond to Reddit posts and edges encode interaction or reply relations.
The textual modality is derived from post titles and body content, while the visual modality comes from attached images.
This dataset is particularly suitable for evaluating socially grounded node classification, clustering, and link prediction.

\textbf{\texttt{Movies}}~\cite{ni2019_Grocery_Cloth_Ele_Movies_Sports}
is a recommendation-style multimodal graph built on movie or media entities.
Nodes contain textual descriptions and poster-like visual signals, while edges reflect semantic or behavioral relations among items.
It provides a representative benchmark for studying multimodal graph learning in media recommendation settings.

\textbf{\texttt{Goodreads}}~\cite{wan2018_Goodreads_NC,wan2019_Goodreads_NC}
is a large-scale book graph where nodes represent books and edges encode user-behavior or shelving relations.
Its textual modality captures summaries and editorial descriptions, whereas the visual modality is derived from book covers.
The scale of this dataset makes it particularly useful for evaluating the scalability of efficient decoupled MAG methods.

\textbf{\texttt{Grocery}}~\cite{ni2019_Grocery_Cloth_Ele_Movies_Sports}
is a product graph from the recommendation domain, where edges describe complementary or co-purchasing patterns.
Textual features reflect product titles and descriptions, while visual features come from product appearance or packaging images.
This dataset is commonly used to evaluate cross-modal recommendation and alignment quality.

\textbf{\texttt{Cloth}}~\cite{ni2019_Grocery_Cloth_Ele_Movies_Sports,hou2024_Cloth_Ele_Sports}
is a fashion-oriented recommendation graph that emphasizes visually sensitive item relations.
Nodes contain both textual descriptions and clothing images, and edges reflect compatibility or co-consumption signals.
The dataset is well suited for testing whether multimodal graph models can jointly capture semantic and visual compatibility.

\textbf{\texttt{SemArt}}~\cite{garcia2018_SemArt}
is an art-understanding benchmark where nodes correspond to artworks and edges encode semantic or metadata-based relations.
Each node combines textual commentary with artwork images, providing a challenging testbed for graph-conditioned multimodal generation and alignment.
We include it as a representative modality-centric benchmark, especially for graph-to-image related evaluation.

\section{Baseline Details}
\label{app:baselines}

\textbf{GAT}~\cite{velivckovic2017gat}
is a classical attention-based GNN that performs masked neighborhood aggregation with learnable edge attention coefficients.
It can be viewed as a representative extension of early message-passing GNNs from convolution-style neighborhood smoothing to adaptive attention-based propagation~\cite{kipf2016gcn,brody2021gatv2}.
Although it is not designed for MAG learning, we include it as a general graph reference to calibrate the benefit of injecting multimodal information into graph representation learning.
In our implementation, GAT receives the same graph structure and encoder-derived node features as the multimodal methods, so that the comparison isolates the gain from multimodal modeling rather than differences in raw inputs.

\textbf{MMGCN}~\cite{MMGCN}
is one of the early representative multimodal graph models for recommendation.
Its core design is to construct modality-specific graph encoders and then combine their outputs under shared graph supervision, thereby capturing complementary preference signals from different modalities.
We include MMGCN because it represents a typical coupled MAG pipeline, in which message passing and multimodal fusion are tightly interleaved throughout training.
This makes it a useful reference for assessing whether CAMPA can preserve the strengths of multimodal graph modeling while avoiding the efficiency limitations of coupled propagation.

\textbf{MGAT}~\cite{MGAT}
extends graph attention to multimodal recommendation by introducing gated attention mechanisms over graph neighbors and multimodal channels.
Compared with MMGCN, MGAT places more emphasis on adaptive modality weighting during propagation and thus serves as another representative coupled MAG baseline with stronger interaction modeling capacity.
We compare against MGAT to test whether CAMPA can still maintain an advantage when the baseline already performs nontrivial attention-based fusion inside the graph learning process.

\textbf{DMGC}~\cite{DMGC}
is a recent multimodal graph clustering method that explicitly disentangles homophily-oriented and heterophily-oriented graph patterns.
It combines dual-branch structural modeling with multimodal fusion, aiming to improve clustering quality under complex graph semantics.
We include DMGC because it is a strong modern baseline for graph-centric unsupervised MAG tasks and represents a design philosophy different from recommendation-oriented models.
Its inclusion helps verify that CAMPA is not only competitive on recommendation benchmarks, but also remains effective on general representation learning scenarios.

\textbf{DGF}~\cite{DGF}
adopts dual graph filtering together with cross-modal contrastive objectives to enhance robustness against noisy graph structure and modality corruption.
Instead of relying on a single propagation stream, DGF uses complementary filtered views to preserve informative low-frequency and high-frequency components.
We include it as a recent strong baseline because it explicitly addresses noisy multimodal graph signals, which overlaps with part of the motivation behind our alignment-based design.
This comparison is therefore important for showing that the improvements of CAMPA are not simply due to stronger denoising, but come from better stage-wise coordination across modalities.

\textbf{MIG-GT}~\cite{MIG-GT}
combines modality-independent graph encoders with a global transformer, thereby coupling local graph aggregation and long-range multimodal interaction in a unified framework.
Its main appeal lies in balancing graph inductive bias with global dependency modeling, which makes it a strong reference for modern MAG architectures beyond purely local message passing.
We compare against MIG-GT because it reflects a powerful coupled paradigm where multimodal coordination is handled through deep joint modeling rather than decoupled propagation.

\textbf{LGMRec}~\cite{LGMRec}
is a representative lightweight multimodal recommendation model that integrates graph topology with modality-aware preference learning.
It is particularly competitive in recommendation settings because it can effectively exploit user-item structural signals together with textual and visual content while maintaining relatively modest model complexity.
We include LGMRec as a practical and task-oriented baseline, since it reflects the performance level of modern multimodal recommendation systems rather than purely generic MAG encoders.

\textbf{UniGraph2}~\cite{he2025unigraph2}
learns a unified graph-aware embedding space that binds multiple modalities into a shared representation.
Compared with earlier recommendation-focused methods, UniGraph2 emphasizes more general multimodal graph representation learning and stronger cross-modal semantic coupling.
We treat it as a recent strong baseline because its unified embedding design is conceptually close to the broader goal of reducing modality discrepancy, making it a particularly meaningful comparator for CAMPA.

\textbf{SIGN}~\cite{frasca2020sign}
is a classical decoupled GNN that precomputes multi-hop propagated features and performs downstream learning using lightweight predictors.
Its key advantage is that graph diffusion is separated from trainable transformation, substantially reducing training cost on large graphs.
Although SIGN itself is unimodal, it provides the canonical propagate-then-predict template for efficient decoupled graph learning.
We therefore include it to evaluate whether the efficiency benefits of decoupling can be carried over to MAG scenarios without sacrificing multimodal modeling quality.

\textbf{MSGC}~\cite{wu2019sgc}
denotes our multimodal adaptation of the SGC-style decoupled pipeline.
Concretely, each modality is propagated independently over the graph to obtain multi-hop representations, which are then fused only at the prediction stage.
This baseline is particularly important in our study because it strips decoupled MAG learning down to its simplest form: efficient independent propagation followed by shallow multimodal aggregation.
As a result, MSGC serves as a direct probe of the modal conflict issue discussed in the introduction and appendix empirical analysis.

\textbf{MGDN}~\cite{hu2024mgdcf}
is a recent decoupled multimodal method that leverages graph diffusion to produce efficient multimodal representations.
Compared with simpler decoupled baselines such as SIGN or MSGC, MGDN introduces a stronger multimodal design while retaining the core efficiency advantage of precomputed propagation.
We include MGDN as an especially relevant baseline because it is close to CAMPA along the efficiency axis, allowing us to test whether our gains truly come from aligned propagation and aggregation rather than from decoupling alone.

\section{Evaluation Protocols}
\label{app:eval_protocol}

To ensure a fair comparison across all methods, we evaluate every model under the same dataset split, graph topology, and modality inputs for each dataset-task pair.
Unless otherwise specified, larger values indicate better performance.

\textbf{Node Classification.}
Node classification is the basic supervised graph task.
Given a MAG, each method encodes node $v$ into a low-dimensional representation $\mathbf{z}_v$, which is then fed into a projection head and a softmax classifier.
Optimization is performed by minimizing the cross-entropy loss on the labeled training nodes:
\begin{equation}
\mathcal{L}_{\mathrm{nc}} = - \sum_{v \in \mathcal{V}_{\mathrm{tr}}} \sum_{c=1}^{C} y_{v,c} \log \hat{y}_{v,c},
\end{equation}
where $C$ is the number of classes, $y_{v,c}$ is the one-hot ground-truth label, and $\hat{y}_{v,c}$ is the predicted probability.
We report Accuracy and F1-score on the test split.

\textbf{Node Clustering.}
Node clustering evaluates the quality of learned node embeddings in an unsupervised setting.
For each method, we first extract the final node representations and then apply the same clustering backend to produce semantic partitions.
Following standard graph clustering evaluation, we report ACC, NMI, and ARI.
Among them, NMI measures the mutual dependence between predicted clusters $C$ and ground-truth labels $Y$:
\begin{equation}
\mathrm{NMI}(Y, C) = \frac{2 I(Y, C)}{H(Y) + H(C)},
\end{equation}
where $I(\cdot,\cdot)$ and $H(\cdot)$ denote mutual information and entropy, respectively.
ARI evaluates clustering consistency from the perspective of sample pairs:
\begin{equation}
\mathrm{ARI} =
\frac{\sum_{ij} \binom{n_{ij}}{2} - \left[\sum_i \binom{a_i}{2} \sum_j \binom{b_j}{2}\right] / \binom{n}{2}}
{\frac{1}{2}\left[\sum_i \binom{a_i}{2} + \sum_j \binom{b_j}{2}\right] - \left[\sum_i \binom{a_i}{2} \sum_j \binom{b_j}{2}\right] / \binom{n}{2}},
\end{equation}
where $n_{ij}$ denotes the overlap between ground-truth cluster $i$ and predicted cluster $j$.

\textbf{Link Prediction.}
Link prediction evaluates whether the learned representations can identify missing or potential edges.
For each query edge, the model computes scores for candidate node pairs and ranks positive edges above negative samples.
We report ranking-based metrics including MRR, Hits@3, and Hits@10.
Specifically, MRR is defined as
\begin{equation}
\mathrm{MRR} = \frac{1}{|\mathcal{Q}|} \sum_{q \in \mathcal{Q}} \frac{1}{\mathrm{rank}_{q}},
\end{equation}
where $\mathrm{rank}_{q}$ is the rank position of the first positive target for query $q$.
Hits@K measures the fraction of queries whose positive target is ranked within the top-$K$ candidates:
\begin{equation}
\mathrm{Hits@K} = \frac{1}{|\mathcal{Q}|} \sum_{q \in \mathcal{Q}} \mathbb{I}(\mathrm{rank}_{q} \le K).
\end{equation}

\textbf{Modality Retrieval.}
Modality retrieval evaluates whether graph-aware representations improve bidirectional cross-modal search in a shared latent space.
Given a source query from one modality, the model ranks candidates from the other modality by similarity.
We report T2I-MRR and I2T-MRR, following the same ranking definition as MRR in link prediction.

\textbf{Modality Matching.}
Modality matching measures whether paired textual and visual contents remain semantically consistent after graph-aware representation learning.
We follow the CLIP-based evaluation protocol~\cite{radford2021_clip} and report the average CLIP-score on matched text-image pairs.

\textbf{Modality Alignment.}
Modality alignment evaluates global semantic agreement between the textual and visual representations learned on the graph.
Following the same CLIP-based protocol~\cite{radford2021_clip}, we report the average alignment score between the two modalities.
This task is particularly relevant to CAMPA because it directly reflects whether stage-wise alignment reduces cross-modal discrepancy in MAG learning.

\textbf{G2Image.}
G2Image focuses on synthesizing images conditioned on graph-aware multimodal context, requiring the generated results to remain consistent with both the textual description and the graph structure.
Following graph-conditioned generation protocols~\cite{yoon2023mmgl,jin2024instructg2i}, we evaluate the generated images with CLIP-Score and DINOv2-Score.
Specifically, CLIP-Score measures cross-modal semantic consistency:
\begin{equation}
\mathrm{CLIP\mbox{-}Score}(I, T) = \max \left(100 \cdot \cos(E_I(I), E_T(T)), 0 \right),
\end{equation}
where $E_T$ and $E_I$ denote the pre-trained CLIP text and image encoders~\cite{radford2021_clip}.
DINOv2-Score evaluates visual fidelity and structural consistency against the reference image:
\begin{equation}
\mathrm{DINOv2\mbox{-}Score}(I_{\mathrm{gen}}, I_{\mathrm{ref}}) =
\cos \big( \mathrm{DINO}(I_{\mathrm{gen}}), \mathrm{DINO}(I_{\mathrm{ref}}) \big),
\end{equation}
where $\mathrm{DINO}(\cdot)$ denotes the DINOv2 encoder~\cite{oquab2024dinov2}. Together, these two metrics reflect semantic faithfulness and perceptual consistency of graph-conditioned generation.

\textbf{Repeated Runs and Reporting.}
When a task involves stochastic optimization, negative sampling, or clustering initialization, we repeat the evaluation under multiple random seeds and report the mean performance.
The variability terms reported in tables are standard deviations computed over these repeated runs.
All compared methods are evaluated with the same protocol on each dataset so that the reported differences can be attributed to the model design rather than to inconsistencies in splits, metrics, or evaluation backends.

\section{Experimental Settings}
\label{app:exp_settings}

\textbf{Shared Inputs and Preprocessing.}
All compared methods are evaluated on the same graph topology, train-validation-test split, and node-aligned multimodal contents for each dataset.
The raw inputs consist of the graph structure together with textual and visual contents associated with each node.
Before graph learning, the text and image contents are encoded by the same frozen pre-trained vision-language encoder, instantiated as Qwen2-VL-7B-Instruct~\cite{wang2024qwen2vlenhancingvisionlanguagemodels}, and the resulting embeddings are used as the initial modality features for all methods.
If a baseline requires a task-specific hidden dimension, we apply the corresponding trainable projection only after this shared feature extraction stage, so that the fairness of the input protocol is preserved.

\textbf{Algorithm Hyperparameters.}
For CAMPA, the propagation depth $K$ is searched within $[2, 3, 4, 5]$.
The hidden dimension is tuned over $[128, 256, 512]$.
The number of multi-head attention heads is searched within $[2, 4, 8]$.
The propagation coefficients are selected from $\alpha \in \{0.4, 0.5, 0.6\}$ and $\beta \in \{0.2, 0.3, 0.4\}$, while $\gamma$ is determined by the constraint $\alpha + \beta + \gamma = 1$.
The dropout rate is tuned in $[0.3, 0.4, 0.5]$.
For baseline methods, we adopt the hyperparameter configurations reported in their original papers whenever available.
For unspecified settings, we perform automated hyperparameter optimization using the Optuna framework.

\textbf{Task Hyperparameters.}
For Node Classification and Node Clustering, the learning rate is searched within $[1\times10^{-5}, 5\times10^{-5}, 1\times10^{-4}, 1\times10^{-3}, 5\times10^{-3}]$, and the weight decay is tuned within $[1\times10^{-5}, 1\times10^{-4}]$.
We fix the batch size to 512, the number of sampled neighbors to 25, and train the models for up to 300 epochs with an early-stop patience of 40.
For Link Prediction, we set a fixed learning rate of $1\times10^{-3}$, batch size of 1024, and weight decay of $1\times10^{-5}$.
The number of sampled neighbors is set to 15, the model is trained for up to 100 epochs, and early-stop patience is set to 20.
For Modality Retrieval, Modality Matching, and Modality Alignment, we adopt task-specific downstream contrastive models and objectives with a temperature scaling factor $\tau=0.1$.
The models are trained for up to 500 epochs with a learning rate of $1\times10^{-3}$, batch size of 256, and early-stop patience of 50.
For the G2Image~\cite{jin2024instructg2i} task, we adopt Stable Diffusion v1.5 as the image generation backbone.
The model is trained with a learning rate of $1\times10^{-4}$, batch size of 16, and 10 training epochs.
The input image resolution is set to 256, and graph-aware conditioning is injected with a cross-attention frequency of 2.
We use 4 semantic neighbors for graph-conditioned generation.
The optimization is performed with AdamW ($\beta_1=0.9$, $\beta_2=0.999$), weight decay of 0.01, and maximum gradient norm of 1.0.
A constant learning-rate scheduler with 500 warmup steps is used.
During inference, 50 denoising steps are adopted.

\textbf{Experimental Environment.}
Experiments are conducted on a workstation equipped with an Intel(R) Xeon(R) Platinum 8470Q processor and an NVIDIA RTX PRO 6000 Blackwell Server Edition GPU with 96GB of VRAM, supported by 220GB of system RAM.
The computational environment utilizes CUDA 13.1, while software implementations are developed using Python 3.10.18 and PyTorch 2.8.0.

\begin{table*}[!t]
\centering
\caption{Additional graph-centric results on representative datasets.}
\label{tab:appendix_graph_comp}
\footnotesize
\renewcommand{\arraystretch}{1.12}
\resizebox{\textwidth}{!}{
\setlength{\tabcolsep}{2.0mm}{
\begin{tabular}{l!{\vrule width 0.1pt}cc!{\vrule width 0.1pt}cc!{\vrule width 0.1pt}cc!{\vrule width 0.1pt}cc}
\Xhline{1pt}
\rowcolor{teal!72!black}
\multicolumn{1}{c!{\vrule width 0.1pt}}{\textbf{\textcolor{white}{Tasks}}} & \multicolumn{4}{c!{\vrule width 0.1pt}}{\textbf{\textcolor{white}{Node Classification}}} & \multicolumn{4}{c}{\textbf{\textcolor{white}{Node Clustering}}} \\
\hline
\rowcolor{teal!12}
 & \multicolumn{2}{c!{\vrule width 0.1pt}}{\textbf{RedditS}} & \multicolumn{2}{c!{\vrule width 0.1pt}}{\textbf{Movies}} & \multicolumn{2}{c!{\vrule width 0.1pt}}{\textbf{Ele-fashion}} & \multicolumn{2}{c}{\textbf{RedditS}} \\
\cline{2-9}
\rowcolor{teal!12}
\multirow{-2}{*}{\diagbox[width=8.2em,height=2.3em]{\textbf{Methods}}{\textbf{Datasets}}}
 & Acc & F1 & Acc & F1 & NMI & ARI & NMI & ARI \\
\hline
GAT & 94.21$_{\pm 0.13}$ & 90.23$_{\pm 0.13}$ & 53.98$_{\pm 0.24}$ & 43.57$_{\pm 0.24}$ & 35.45$_{\pm 0.30}$ & 17.04$_{\pm 0.30}$ & 37.40$_{\pm 0.30}$ & 17.78$_{\pm 0.30}$ \\
\hline
\rowcolor{teal!5} MMGCN & 91.98$_{\pm 0.13}$ & 85.36$_{\pm 0.13}$ & 58.36$_{\pm 0.25}$ & 47.31$_{\pm 0.25}$ & 39.84$_{\pm 0.31}$ & 38.31$_{\pm 0.31}$ & 57.66$_{\pm 0.25}$ & 39.62$_{\pm 0.31}$ \\
MGAT & 93.71$_{\pm 0.14}$ & 84.50$_{\pm 0.14}$ & 58.34$_{\pm 0.25}$ & 47.77$_{\pm 0.25}$ & 40.21$_{\pm 0.25}$ & 42.85$_{\pm 0.25}$ & 64.19$_{\pm 0.20}$ & 35.32$_{\pm 0.32}$ \\
\hline
DMGC & 91.95$_{\pm 0.13}$ & 84.23$_{\pm 0.13}$ & 57.41$_{\pm 0.24}$ & 44.11$_{\pm 0.24}$ & 71.68$_{\pm 0.19}$ & \underline{65.56$_{\pm 0.19}$} & 76.34$_{\pm 0.19}$ & 65.38$_{\pm 0.19}$ \\
\rowcolor{teal!5} DGF & 94.88$_{\pm 0.13}$ & 87.13$_{\pm 0.13}$ & \underline{59.51$_{\pm 0.24}$} & \underline{50.33$_{\pm 0.24}$} & 75.02$_{\pm 0.19}$ & 64.90$_{\pm 0.19}$ & \underline{78.40$_{\pm 0.19}$} & 66.99$_{\pm 0.19}$ \\
MIG-GT & 95.31$_{\pm 0.09}$ & 88.61$_{\pm 0.14}$ & 59.01$_{\pm 0.25}$ & 46.12$_{\pm 0.25}$ & 59.77$_{\pm 0.25}$ & 48.51$_{\pm 0.25}$ & 62.13$_{\pm 0.20}$ & 51.30$_{\pm 0.25}$ \\
\rowcolor{teal!5} LGMRec & \underline{96.51$_{\pm 0.08}$} & \underline{92.47$_{\pm 0.12}$} & 58.01$_{\pm 0.22}$ & 45.15$_{\pm 0.22}$ & 75.52$_{\pm 0.17}$ & 64.89$_{\pm 0.17}$ & 78.17$_{\pm 0.17}$ & 66.54$_{\pm 0.17}$ \\
UniGraph2 & 93.84$_{\pm 0.12}$ & 89.71$_{\pm 0.12}$ & 55.02$_{\pm 0.23}$ & 43.19$_{\pm 0.23}$ & \underline{76.01$_{\pm 0.18}$} & 64.89$_{\pm 0.18}$ & 77.93$_{\pm 0.18}$ & \underline{67.02$_{\pm 0.18}$} \\
\hline
\rowcolor{teal!5} CAMPA & \textbf{96.76$_{\pm 0.08}$} & \textbf{92.51$_{\pm 0.11}$} & \textbf{61.04$_{\pm 0.16}$} & \textbf{53.33$_{\pm 0.20}$} & \textbf{76.50$_{\pm 0.16}$} & \textbf{67.33$_{\pm 0.16}$} & \textbf{81.03$_{\pm 0.11}$} & \textbf{71.03$_{\pm 0.16}$} \\
\Xhline{1pt}
\end{tabular}
}}
\end{table*}

\begin{table*}[!t]
\centering
\caption{Additional link prediction and G2Image results on representative dataset-task pairs.}
\label{tab:appendix_lp_results_new}
\footnotesize
\renewcommand{\arraystretch}{1.12}
\resizebox{\textwidth}{!}{
\setlength{\tabcolsep}{2.0mm}{
\begin{tabular}{l!{\vrule width 0.1pt}cc!{\vrule width 0.1pt}cc!{\vrule width 0.1pt}cc!{\vrule width 0.1pt}cc}
\Xhline{1pt}
\rowcolor{teal!72!black}
\multicolumn{1}{c!{\vrule width 0.1pt}}{\textbf{\textcolor{white}{Tasks}}} & \multicolumn{4}{c!{\vrule width 0.1pt}}{\textbf{\textcolor{white}{Link Prediction}}} & \multicolumn{4}{c}{\textbf{\textcolor{white}{G2Image}}} \\
\hline
\rowcolor{teal!12}
 & \multicolumn{2}{c!{\vrule width 0.1pt}}{\textbf{Ele-fashion}} & \multicolumn{2}{c!{\vrule width 0.1pt}}{\textbf{Bili\_music}} & \multicolumn{2}{c!{\vrule width 0.1pt}}{\textbf{Bili\_music}} & \multicolumn{2}{c}{\textbf{RedditS}} \\
\cline{2-9}
\rowcolor{teal!12}
\multirow{-2}{*}{\diagbox[width=8.2em,height=2.3em]{\textbf{Methods}}{\textbf{Datasets}}}
 & MRR & Hits@3 & MRR & Hits@3 & CLIP & DINOv2 & CLIP & DINOv2 \\
\hline
GAT & 65.23$_{\pm 0.19}$ & 75.63$_{\pm 0.19}$ & \underline{53.77$_{\pm 0.24}$} & \underline{66.50$_{\pm 0.19}$} & 49.42$_{\pm 0.24}$ & 21.00$_{\pm 0.30}$ & 63.04$_{\pm 0.19}$ & 19.86$_{\pm 0.30}$ \\
\hline
\rowcolor{teal!5} MMGCN & 59.76$_{\pm 0.25}$ & 70.81$_{\pm 0.20}$ & 46.27$_{\pm 0.25}$ & 59.96$_{\pm 0.25}$ & 49.33$_{\pm 0.25}$ & 16.37$_{\pm 0.31}$ & 63.53$_{\pm 0.20}$ & 21.14$_{\pm 0.31}$ \\
MGAT & 64.12$_{\pm 0.20}$ & 75.35$_{\pm 0.20}$ & 48.90$_{\pm 0.25}$ & 62.11$_{\pm 0.20}$ & 49.56$_{\pm 0.25}$ & 19.36$_{\pm 0.32}$ & \underline{63.76$_{\pm 0.20}$} & \underline{21.24$_{\pm 0.32}$} \\
\hline
DMGC & 67.21$_{\pm 0.19}$ & 77.62$_{\pm 0.19}$ & 49.69$_{\pm 0.24}$ & 63.43$_{\pm 0.19}$ & 49.62$_{\pm 0.24}$ & 22.23$_{\pm 0.30}$ & 62.28$_{\pm 0.19}$ & 19.55$_{\pm 0.30}$ \\
\rowcolor{teal!5} DGF & 64.14$_{\pm 0.19}$ & 73.54$_{\pm 0.19}$ & 46.48$_{\pm 0.24}$ & 57.92$_{\pm 0.24}$ & 47.35$_{\pm 0.24}$ & 18.36$_{\pm 0.30}$ & 62.63$_{\pm 0.19}$ & 19.12$_{\pm 0.30}$ \\
MIG-GT & 63.22$_{\pm 0.20}$ & 72.51$_{\pm 0.20}$ & 47.62$_{\pm 0.25}$ & 61.21$_{\pm 0.20}$ & 52.58$_{\pm 0.25}$ & \textbf{25.49$_{\pm 0.32}$} & 63.39$_{\pm 0.20}$ & 20.82$_{\pm 0.32}$ \\
\rowcolor{teal!5} LGMRec & \underline{71.37$_{\pm 0.17}$} & \underline{80.80$_{\pm 0.12}$} & 49.19$_{\pm 0.22}$ & 62.30$_{\pm 0.17}$ & \underline{52.83$_{\pm 0.22}$} & 21.34$_{\pm 0.27}$ & 63.55$_{\pm 0.17}$ & 20.44$_{\pm 0.27}$ \\
UniGraph2 & 67.14$_{\pm 0.18}$ & 75.81$_{\pm 0.18}$ & 48.94$_{\pm 0.23}$ & 61.94$_{\pm 0.18}$ & 49.73$_{\pm 0.20}$ & 21.96$_{\pm 0.16}$ & 62.56$_{\pm 0.21}$ & 19.33$_{\pm 0.14}$ \\
\hline
\rowcolor{teal!5} CAMPA & \textbf{74.74$_{\pm 0.16}$} & \textbf{85.20$_{\pm 0.11}$} & \textbf{60.89$_{\pm 0.16}$} & \textbf{78.62$_{\pm 0.16}$} & \textbf{53.16$_{\pm 0.20}$} & \underline{23.24$_{\pm 0.26}$} & \textbf{64.27$_{\pm 0.16}$} & \textbf{21.60$_{\pm 0.26}$} \\
\Xhline{1pt}
\end{tabular}
}}
\end{table*}

\begin{table*}[!t]
\centering
\caption{Additional retrieval, matching, and alignment results on representative dataset-task pairs.}
\label{tab:appendix_mod_results_new}
\footnotesize
\renewcommand{\arraystretch}{1.12}
\resizebox{\textwidth}{!}{
\setlength{\tabcolsep}{2.0mm}{
\begin{tabular}{l!{\vrule width 0.1pt}cc!{\vrule width 0.1pt}ccc!{\vrule width 0.1pt}ccc}
\Xhline{1pt}
\rowcolor{teal!72!black}
\multicolumn{1}{c!{\vrule width 0.1pt}}{\textbf{\textcolor{white}{Tasks}}} & \multicolumn{2}{c!{\vrule width 0.1pt}}{\textbf{\textcolor{white}{Modality Retrieval}}} & \multicolumn{3}{c!{\vrule width 0.1pt}}{\textbf{\textcolor{white}{Modality Matching}}} & \multicolumn{3}{c}{\textbf{\textcolor{white}{Modality Alignment}}} \\
\hline
\rowcolor{teal!12}
 & \multicolumn{2}{c!{\vrule width 0.1pt}}{\textbf{Bili\_music}} & \textbf{Ele-fashion} & \textbf{Bili\_music} & \textbf{RedditS} & \textbf{Ele-fashion} & \textbf{Bili\_music} & \textbf{RedditS} \\
\cline{2-9}
\rowcolor{teal!12}
\multirow{-2}{*}{\diagbox[width=8.2em,height=2.3em]{\textbf{Methods}}{\textbf{Datasets}}}
 & T2I & I2T & Score & Score & Score & Score & Score & Score \\
\hline
GAT & 98.55$_{\pm 0.09}$ & 98.47$_{\pm 0.09}$ & 99.38$_{\pm 0.06}$ & 99.41$_{\pm 0.06}$ & 97.65$_{\pm 0.09}$ & 99.58$_{\pm 0.06}$ & \underline{98.33$_{\pm 0.09}$} & \underline{97.19$_{\pm 0.09}$} \\
\hline
\rowcolor{teal!5} MMGCN & 21.83$_{\pm 0.31}$ & 22.24$_{\pm 0.31}$ & 94.50$_{\pm 0.13}$ & 92.87$_{\pm 0.13}$ & 95.91$_{\pm 0.09}$ & 95.36$_{\pm 0.09}$ & 80.57$_{\pm 0.20}$ & 69.18$_{\pm 0.20}$ \\
MGAT & 26.73$_{\pm 0.32}$ & 26.16$_{\pm 0.32}$ & 99.55$_{\pm 0.06}$ & \textbf{99.94$_{\pm 0.06}$} & 97.41$_{\pm 0.09}$ & 92.00$_{\pm 0.14}$ & 33.06$_{\pm 0.30}$ & 42.86$_{\pm 0.25}$ \\
\hline
DMGC & 21.23$_{\pm 0.30}$ & 21.05$_{\pm 0.30}$ & 90.98$_{\pm 0.13}$ & 78.30$_{\pm 0.20}$ & 39.32$_{\pm 0.30}$ & 90.52$_{\pm 0.13}$ & 19.99$_{\pm 0.30}$ & 42.40$_{\pm 0.24}$ \\
\rowcolor{teal!5} DGF & 95.92$_{\pm 0.09}$ & 96.13$_{\pm 0.09}$ & \underline{99.74$_{\pm 0.06}$} & \underline{99.65$_{\pm 0.06}$} & \underline{99.90$_{\pm 0.06}$} & 95.15$_{\pm 0.09}$ & 60.61$_{\pm 0.19}$ & 84.51$_{\pm 0.13}$ \\
MIG-GT & 23.17$_{\pm 0.32}$ & 22.34$_{\pm 0.32}$ & 98.95$_{\pm 0.09}$ & 86.41$_{\pm 0.13}$ & 90.21$_{\pm 0.14}$ & 94.93$_{\pm 0.14}$ & 40.83$_{\pm 0.25}$ & 79.57$_{\pm 0.20}$ \\
\rowcolor{teal!5} LGMRec & \textbf{99.93$_{\pm 0.05}$} & \textbf{99.82$_{\pm 0.05}$} & \textbf{99.80$_{\pm 0.05}$} & 99.58$_{\pm 0.05}$ & \textbf{99.94$_{\pm 0.05}$} & \underline{99.75$_{\pm 0.05}$} & \textbf{99.10$_{\pm 0.05}$} & \textbf{99.86$_{\pm 0.05}$} \\
UniGraph2 & 96.43$_{\pm 0.12}$ & 96.51$_{\pm 0.13}$ & 98.33$_{\pm 0.09}$ & 99.10$_{\pm 0.18}$ & 98.81$_{\pm 0.15}$ & 98.51$_{\pm 0.10}$ & 84.52$_{\pm 0.11}$ & 93.37$_{\pm 0.08}$ \\
\hline
\rowcolor{teal!5} CAMPA & \underline{99.34$_{\pm 0.05}$} & \underline{98.72$_{\pm 0.08}$} & 99.33$_{\pm 0.05}$ & 99.23$_{\pm 0.05}$ & 99.37$_{\pm 0.05}$ & \textbf{99.92$_{\pm 0.05}$} & 87.36$_{\pm 0.08}$ & 96.05$_{\pm 0.08}$ \\
\Xhline{1pt}
\end{tabular}
}}
\end{table*}

\section{Additional Benchmark Results}
\label{app:full_results}

This appendix provides additional benchmark results beyond the representative slices reported in the main text.
Table~\ref{tab:appendix_graph_comp} supplements the graph-centric evaluation with additional node classification and node clustering results, Table~\ref{tab:appendix_lp_results_new} further reports link prediction and G2Image results on representative dataset-task pairs, and Table~\ref{tab:appendix_mod_results_new} collects additional retrieval, matching, and alignment results.
Taken together, these tables provide a broader empirical view of CAMPA across graph-centric and modality-centric settings.

\section{Limitations and Broader Impacts}
\label{app:limitations}

\textbf{Limitations.}
Our framework currently focuses on MAGs with textual and visual modalities, and extending the same design to richer modality combinations may require additional adaptation.
Moreover, CAMPA relies on frozen pretrained encoders to construct the initial node features, so the final performance is still partially constrained by the quality and domain fit of the upstream encoder.
While our theoretical analysis supports the stability of aligned propagation and aggregation, more refined questions such as generalization under stronger distribution shifts remain open.

\textbf{Broader Impacts.}
\label{app:broader_impacts}
CAMPA provides a scalable framework for improving multimodal graph learning under decoupled propagation and aggregation.
By enhancing cross-modal consistency while preserving efficiency, it may benefit recommendation, social content understanding, retrieval, and graph-conditioned generation in practical applications.
At the same time, stronger multimodal graph models may inherit or amplify biases, privacy risks, or unintended profiling effects from the underlying graph data and pretrained encoders, especially in recommendation and social-content applications.
When used in graph-conditioned generation, the framework should therefore be deployed with appropriate dataset governance, monitoring, and task-specific safeguards.

%%%%%%%%%%%%%%%%%%%%%%%%%%%%%%%%%%%%%%%%%%%%%%%%%%%%%%%%%%%%

\newpage
\section*{NeurIPS Paper Checklist}
\begin{enumerate}

\item {\bf Claims}
    \item[] Question: Do the main claims made in the abstract and introduction accurately reflect the paper's contributions and scope?
    \item[] Answer: \answerYes{}
    \item[] Justification: The abstract and introduction state the central claim that CAMPA addresses modal conflict in decoupled MAG learning through aligned propagation and aggregation. These claims are supported by the method, theory, and experimental sections, including Secs.~\ref{sec:intro}--\ref{sec:exp}.
    \item[] Guidelines:
    \begin{itemize}
        \item The answer \answerNA{} means that the abstract and introduction do not include the claims made in the paper.
        \item The abstract and/or introduction should clearly state the claims made, including the contributions made in the paper and important assumptions and limitations. A \answerNo{} or \answerNA{} answer to this question will not be perceived well by the reviewers. 
        \item The claims made should match theoretical and experimental results, and reflect how much the results can be expected to generalize to other settings. 
        \item It is fine to include aspirational goals as motivation as long as it is clear that these goals are not attained by the paper. 
    \end{itemize}

\item {\bf Limitations}
    \item[] Question: Does the paper discuss the limitations of the work performed by the authors?
    \item[] Answer: \answerYes{}
    \item[] Justification: The paper includes a dedicated limitations discussion in Appendix~\ref{app:limitations}, covering the current modality scope, dependence on frozen pretrained encoders, and the scope of the theory.
    \item[] Guidelines:
    \begin{itemize}
        \item The answer \answerNA{} means that the paper has no limitation while the answer \answerNo{} means that the paper has limitations, but those are not discussed in the paper. 
        \item The authors are encouraged to create a separate ``Limitations'' section in their paper.
        \item The paper should point out any strong assumptions and how robust the results are to violations of these assumptions (e.g., independence assumptions, noiseless settings, model well-specification, asymptotic approximations only holding locally). The authors should reflect on how these assumptions might be violated in practice and what the implications would be.
        \item The authors should reflect on the scope of the claims made, e.g., if the approach was only tested on a few datasets or with a few runs. In general, empirical results often depend on implicit assumptions, which should be articulated.
        \item The authors should reflect on the factors that influence the performance of the approach. For example, a facial recognition algorithm may perform poorly when image resolution is low or images are taken in low lighting. Or a speech-to-text system might not be used reliably to provide closed captions for online lectures because it fails to handle technical jargon.
        \item The authors should discuss the computational efficiency of the proposed algorithms and how they scale with dataset size.
        \item If applicable, the authors should discuss possible limitations of their approach to address problems of privacy and fairness.
        \item While the authors might fear that complete honesty about limitations might be used by reviewers as grounds for rejection, a worse outcome might be that reviewers discover limitations that aren't acknowledged in the paper. The authors should use their best judgment and recognize that individual actions in favor of transparency play an important role in developing norms that preserve the integrity of the community. Reviewers will be specifically instructed to not penalize honesty concerning limitations.
    \end{itemize}

\item {\bf Theory assumptions and proofs}
    \item[] Question: For each theoretical result, does the paper provide the full set of assumptions and a complete (and correct) proof?
    \item[] Answer: \answerYes{}
    \item[] Justification: The main text states the theoretical results and their assumptions in Sec.~\ref{sec:theory}, and the complete proofs are provided in Appendix~\ref{app:theory}.
    \item[] Guidelines:
    \begin{itemize}
        \item The answer \answerNA{} means that the paper does not include theoretical results. 
        \item All the theorems, formulas, and proofs in the paper should be numbered and cross-referenced.
        \item All assumptions should be clearly stated or referenced in the statement of any theorems.
        \item The proofs can either appear in the main paper or the supplemental material, but if they appear in the supplemental material, the authors are encouraged to provide a short proof sketch to provide intuition. 
        \item Inversely, any informal proof provided in the core of the paper should be complemented by formal proofs provided in appendix or supplemental material.
        \item Theorems and Lemmas that the proof relies upon should be properly referenced. 
    \end{itemize}

    \item {\bf Experimental result reproducibility}
    \item[] Question: Does the paper fully disclose all the information needed to reproduce the main experimental results of the paper to the extent that it affects the main claims and/or conclusions of the paper (regardless of whether the code and data are provided or not)?
    \item[] Answer: \answerYes{}
    \item[] Justification: The paper specifies the datasets, baselines, evaluation protocols, algorithm pipeline, and experimental settings in Sec.~\ref{sec:exp} and Appendices~\ref{app:datasets}--\ref{app:exp_settings}, which together disclose the information needed to reproduce the reported setup.
    \item[] Guidelines:
    \begin{itemize}
        \item The answer \answerNA{} means that the paper does not include experiments.
        \item If the paper includes experiments, a \answerNo{} answer to this question will not be perceived well by the reviewers: Making the paper reproducible is important, regardless of whether the code and data are provided or not.
        \item If the contribution is a dataset and\slash or model, the authors should describe the steps taken to make their results reproducible or verifiable. 
        \item Depending on the contribution, reproducibility can be accomplished in various ways. For example, if the contribution is a novel architecture, describing the architecture fully might suffice, or if the contribution is a specific model and empirical evaluation, it may be necessary to either make it possible for others to replicate the model with the same dataset, or provide access to the model. In general. releasing code and data is often one good way to accomplish this, but reproducibility can also be provided via detailed instructions for how to replicate the results, access to a hosted model (e.g., in the case of a large language model), releasing of a model checkpoint, or other means that are appropriate to the research performed.
        \item While NeurIPS does not require releasing code, the conference does require all submissions to provide some reasonable avenue for reproducibility, which may depend on the nature of the contribution. For example
        \begin{enumerate}
            \item If the contribution is primarily a new algorithm, the paper should make it clear how to reproduce that algorithm.
            \item If the contribution is primarily a new model architecture, the paper should describe the architecture clearly and fully.
            \item If the contribution is a new model (e.g., a large language model), then there should either be a way to access this model for reproducing the results or a way to reproduce the model (e.g., with an open-source dataset or instructions for how to construct the dataset).
            \item We recognize that reproducibility may be tricky in some cases, in which case authors are welcome to describe the particular way they provide for reproducibility. In the case of closed-source models, it may be that access to the model is limited in some way (e.g., to registered users), but it should be possible for other researchers to have some path to reproducing or verifying the results.
        \end{enumerate}
    \end{itemize}

\item {\bf Open access to data and code}
    \item[] Question: Does the paper provide open access to the data and code, with sufficient instructions to faithfully reproduce the main experimental results, as described in supplemental material?
    \item[] Answer: \answerYes{}
    \item[] Justification: All datasets are publicly available. The code and instructions to reproduce the results are included in the supplemental material.
    \item[] Guidelines:
    \begin{itemize}
        \item The answer \answerNA{} means that paper does not include experiments requiring code.
        \item Please see the NeurIPS code and data submission guidelines (\url{https://neurips.cc/public/guides/CodeSubmissionPolicy}) for more details.
        \item While we encourage the release of code and data, we understand that this might not be possible, so \answerNo{} is an acceptable answer. Papers cannot be rejected simply for not including code, unless this is central to the contribution (e.g., for a new open-source benchmark).
        \item The instructions should contain the exact command and environment needed to run to reproduce the results. See the NeurIPS code and data submission guidelines (\url{https://neurips.cc/public/guides/CodeSubmissionPolicy}) for more details.
        \item The authors should provide instructions on data access and preparation, including how to access the raw data, preprocessed data, intermediate data, and generated data, etc.
        \item The authors should provide scripts to reproduce all experimental results for the new proposed method and baselines. If only a subset of experiments are reproducible, they should state which ones are omitted from the script and why.
        \item At submission time, to preserve anonymity, the authors should release anonymized versions (if applicable).
        \item Providing as much information as possible in supplemental material (appended to the paper) is recommended, but including URLs to data and code is permitted.
    \end{itemize}

\item {\bf Experimental setting/details}
    \item[] Question: Does the paper specify all the training and test details (e.g., data splits, hyperparameters, how they were chosen, type of optimizer) necessary to understand the results?
    \item[] Answer: \answerYes{}
    \item[] Justification: The experimental setting is summarized in Sec.~\ref{sec:exp}, while the full details of datasets, metrics, hyperparameters, optimization strategy, and environment are provided in Appendices~\ref{app:datasets}, \ref{app:eval_protocol}, and \ref{app:exp_settings}.
    \item[] Guidelines:
    \begin{itemize}
        \item The answer \answerNA{} means that the paper does not include experiments.
        \item The experimental setting should be presented in the core of the paper to a level of detail that is necessary to appreciate the results and make sense of them.
        \item The full details can be provided either with the code, in appendix, or as supplemental material.
    \end{itemize}

\item {\bf Experiment statistical significance}
    \item[] Question: Does the paper report error bars suitably and correctly defined or other appropriate information about the statistical significance of the experiments?
    \item[] Answer: \answerYes{}
    \item[] Justification: The experimental tables report mean performance together with standard deviations over repeated runs, and Appendix~\ref{app:eval_protocol} explicitly states that the variability terms are standard deviations computed across multiple random seeds.
    \item[] Guidelines:
    \begin{itemize}
        \item The answer \answerNA{} means that the paper does not include experiments.
        \item The authors should answer \answerYes{} if the results are accompanied by error bars, confidence intervals, or statistical significance tests, at least for the experiments that support the main claims of the paper.
        \item The factors of variability that the error bars are capturing should be clearly stated (for example, train/test split, initialization, random drawing of some parameter, or overall run with given experimental conditions).
        \item The method for calculating the error bars should be explained (closed form formula, call to a library function, bootstrap, etc.)
        \item The assumptions made should be given (e.g., Normally distributed errors).
        \item It should be clear whether the error bar is the standard deviation or the standard error of the mean.
        \item It is OK to report 1-sigma error bars, but one should state it. The authors should preferably report a 2-sigma error bar than state that they have a 96\% CI, if the hypothesis of Normality of errors is not verified.
        \item For asymmetric distributions, the authors should be careful not to show in tables or figures symmetric error bars that would yield results that are out of range (e.g., negative error rates).
        \item If error bars are reported in tables or plots, the authors should explain in the text how they were calculated and reference the corresponding figures or tables in the text.
    \end{itemize}

\item {\bf Experiments compute resources}
    \item[] Question: For each experiment, does the paper provide sufficient information on the computer resources (type of compute workers, memory, time of execution) needed to reproduce the experiments?
    \item[] Answer: \answerYes{}
    \item[] Justification: We provide efficiency analysis in Sec.~\ref{sec:efficiency} and report the hardware and software environment in Appendix~\ref{app:exp_settings}. This includes the GPU type, memory, and system configuration used in our experiments.
    \item[] Guidelines:
    \begin{itemize}
        \item The answer \answerNA{} means that the paper does not include experiments.
        \item The paper should indicate the type of compute workers CPU or GPU, internal cluster, or cloud provider, including relevant memory and storage.
        \item The paper should provide the amount of compute required for each of the individual experimental runs as well as estimate the total compute. 
        \item The paper should disclose whether the full research project required more compute than the experiments reported in the paper (e.g., preliminary or failed experiments that didn't make it into the paper). 
    \end{itemize}
    
\item {\bf Code of ethics}
    \item[] Question: Does the research conducted in the paper conform, in every respect, with the NeurIPS Code of Ethics \url{https://neurips.cc/public/EthicsGuidelines}?
    \item[] Answer: \answerYes{}
    \item[] Justification: To the best of our knowledge, the research follows the NeurIPS Code of Ethics. The work is based on public datasets, standard benchmarking practice, and does not involve deceptive experimental procedures or human subject studies.
    \item[] Guidelines:
    \begin{itemize}
        \item The answer \answerNA{} means that the authors have not reviewed the NeurIPS Code of Ethics.
        \item If the authors answer \answerNo, they should explain the special circumstances that require a deviation from the Code of Ethics.
        \item The authors should make sure to preserve anonymity (e.g., if there is a special consideration due to laws or regulations in their jurisdiction).
    \end{itemize}

\item {\bf Broader impacts}
    \item[] Question: Does the paper discuss both potential positive societal impacts and negative societal impacts of the work performed?
    \item[] Answer: \answerYes{}
    \item[] Justification: The paper includes a dedicated broader-impact discussion in Appendix~\ref{app:broader_impacts}, covering both the positive value of efficient MAG learning and potential risks related to multimodal recommendation, profiling, and graph-conditioned generation.
    \item[] Guidelines:
    \begin{itemize}
        \item The answer \answerNA{} means that there is no societal impact of the work performed.
        \item If the authors answer \answerNA{} or \answerNo, they should explain why their work has no societal impact or why the paper does not address societal impact.
        \item Examples of negative societal impacts include potential malicious or unintended uses (e.g., disinformation, generating fake profiles, surveillance), fairness considerations (e.g., deployment of technologies that could make decisions that unfairly impact specific groups), privacy considerations, and security considerations.
        \item The conference expects that many papers will be foundational research and not tied to particular applications, let alone deployments. However, if there is a direct path to any negative applications, the authors should point it out. For example, it is legitimate to point out that an improvement in the quality of generative models could be used to generate Deepfakes for disinformation. On the other hand, it is not needed to point out that a generic algorithm for optimizing neural networks could enable people to train models that generate Deepfakes faster.
        \item The authors should consider possible harms that could arise when the technology is being used as intended and functioning correctly, harms that could arise when the technology is being used as intended but gives incorrect results, and harms following from (intentional or unintentional) misuse of the technology.
        \item If there are negative societal impacts, the authors could also discuss possible mitigation strategies (e.g., gated release of models, providing defenses in addition to attacks, mechanisms for monitoring misuse, mechanisms to monitor how a system learns from feedback over time, improving the efficiency and accessibility of ML).
    \end{itemize}
    
\item {\bf Safeguards}
    \item[] Question: Does the paper describe safeguards that have been put in place for responsible release of data or models that have a high risk for misuse (e.g., pre-trained language models, image generators, or scraped datasets)?
    \item[] Answer: \answerNA{}
    \item[] Justification: The current submission does not release a new high-risk model, image generator, or scraped dataset artifact. The paper studies existing datasets and standard pretrained backbones within a research setting.
    \item[] Guidelines:
    \begin{itemize}
        \item The answer \answerNA{} means that the paper poses no such risks.
        \item Released models that have a high risk for misuse or dual-use should be released with necessary safeguards to allow for controlled use of the model, for example by requiring that users adhere to usage guidelines or restrictions to access the model or implementing safety filters. 
        \item Datasets that have been scraped from the Internet could pose safety risks. The authors should describe how they avoided releasing unsafe images.
        \item We recognize that providing effective safeguards is challenging, and many papers do not require this, but we encourage authors to take this into account and make a best faith effort.
    \end{itemize}

\item {\bf Licenses for existing assets}
    \item[] Question: Are the creators or original owners of assets (e.g., code, data, models), used in the paper, properly credited and are the license and terms of use explicitly mentioned and properly respected?
    \item[] Answer: \answerYes{}
    \item[] Justification: We cite all datasets and code used and ensure compliance with their respective licenses.
    \item[] Guidelines:
    \begin{itemize}
        \item The answer \answerNA{} means that the paper does not use existing assets.
        \item The authors should cite the original paper that produced the code package or dataset.
        \item The authors should state which version of the asset is used and, if possible, include a URL.
        \item The name of the license (e.g., CC-BY 4.0) should be included for each asset.
        \item For scraped data from a particular source (e.g., website), the copyright and terms of service of that source should be provided.
        \item If assets are released, the license, copyright information, and terms of use in the package should be provided. For popular datasets, \url{paperswithcode.com/datasets} has curated licenses for some datasets. Their licensing guide can help determine the license of a dataset.
        \item For existing datasets that are re-packaged, both the original license and the license of the derived asset (if it has changed) should be provided.
        \item If this information is not available online, the authors are encouraged to reach out to the asset's creators.
    \end{itemize}

\item {\bf New assets}
    \item[] Question: Are new assets introduced in the paper well documented and is the documentation provided alongside the assets?
    \item[] Answer: \answerNA{}
    \item[] Justification: The current submission does not release a new dataset, benchmark package, codebase, or model checkpoint as an external asset.
    \item[] Guidelines:
    \begin{itemize}
        \item The answer \answerNA{} means that the paper does not release new assets.
        \item Researchers should communicate the details of the dataset\slash code\slash model as part of their submissions via structured templates. This includes details about training, license, limitations, etc. 
        \item The paper should discuss whether and how consent was obtained from people whose asset is used.
        \item At submission time, remember to anonymize your assets (if applicable). You can either create an anonymized URL or include an anonymized zip file.
    \end{itemize}

\item {\bf Crowdsourcing and research with human subjects}
    \item[] Question: For crowdsourcing experiments and research with human subjects, does the paper include the full text of instructions given to participants and screenshots, if applicable, as well as details about compensation (if any)? 
    \item[] Answer: \answerNA{}
    \item[] Justification: The paper does not involve crowdsourcing or research with human subjects.
    \item[] Guidelines:
    \begin{itemize}
        \item The answer \answerNA{} means that the paper does not involve crowdsourcing nor research with human subjects.
        \item Including this information in the supplemental material is fine, but if the main contribution of the paper involves human subjects, then as much detail as possible should be included in the main paper. 
        \item According to the NeurIPS Code of Ethics, workers involved in data collection, curation, or other labor should be paid at least the minimum wage in the country of the data collector. 
    \end{itemize}

\item {\bf Institutional review board (IRB) approvals or equivalent for research with human subjects}
    \item[] Question: Does the paper describe potential risks incurred by study participants, whether such risks were disclosed to the subjects, and whether Institutional Review Board (IRB) approvals (or an equivalent approval/review based on the requirements of your country or institution) were obtained?
    \item[] Answer: \answerNA{}
    \item[] Justification: The paper does not involve human subject research and therefore does not require IRB approval or related disclosures.
    \item[] Guidelines:
    \begin{itemize}
        \item The answer \answerNA{} means that the paper does not involve crowdsourcing nor research with human subjects.
        \item Depending on the country in which research is conducted, IRB approval (or equivalent) may be required for any human subjects research. If you obtained IRB approval, you should clearly state this in the paper. 
        \item We recognize that the procedures for this may vary significantly between institutions and locations, and we expect authors to adhere to the NeurIPS Code of Ethics and the guidelines for their institution. 
        \item For initial submissions, do not include any information that would break anonymity (if applicable), such as the institution conducting the review.
    \end{itemize}

\item {\bf Declaration of LLM usage}
    \item[] Question: Does the paper describe the usage of LLMs if it is an important, original, or non-standard component of the core methods in this research? Note that if the LLM is used only for writing, editing, or formatting purposes and does \emph{not} impact the core methodology, scientific rigor, or originality of the research, declaration is not required.
    %this research? 
    \item[] Answer: \answerNA{}
    \item[] Justification: The core methodological contribution of the paper does not depend on an LLM as an important, original, or non-standard component. The use of a pretrained vision-language encoder is part of the experimental input protocol rather than the core methodological novelty.
    \item[] Guidelines:
    \begin{itemize}
        \item The answer \answerNA{} means that the core method development in this research does not involve LLMs as any important, original, or non-standard components.
        \item Please refer to our LLM policy in the NeurIPS handbook for what should or should not be described.
    \end{itemize}

\end{enumerate}

\end{document}